\newtheorem{assumption}{\textbf{Assumption}}
\begin{document}

\title{Stabilizing Sharpness-aware Minimization Through A Simple Renormalization Strategy}


\author{\name Chengli Tan \email cltan023@outlook.com \\
	\addr School of Mathematics and Statistics\\
	Xi'an Jiaotong University\\
	Xi'an, 710049, China
	\AND
	\name Jiangshe Zhang \email jszhang@mail.xjtu.edu.cn \\
	\addr School of Mathematics and Statistics\\
	Xi'an Jiaotong University\\
	Xi'an, 710049, China
	\AND
	\name Junmin Liu \email junminliu@mail.xjtu.edu.cn \\
	\addr School of Mathematics and Statistics\\
	Xi'an Jiaotong University\\
	Xi'an, 710049, China
	\AND
	\name Yicheng Wang \email ycwang@stu.xjtu.edu.cn \\
	\addr School of Mathematics and Statistics\\
	Xi'an Jiaotong University\\
	Xi'an, 710049, China
	\AND
	\name Yunda Hao \email yunda@cwi.nl \\
	\addr Department of Machine Learning\\
	Centrum Wiskunde \& Informatica\\
	Amsterdam, 1098 XG, the Netherlands
}

\editor{N/A}

\maketitle

\begin{abstract}
	Recently, sharpness-aware minimization (SAM) has attracted much attention because of its surprising effectiveness in improving generalization performance.
	However, compared to stochastic gradient descent (SGD), it is more prone to getting stuck at the saddle points, which as a result may lead to performance degradation.
	To address this issue, we propose a simple renormalization strategy, dubbed Stable SAM (SSAM), so that the gradient norm of the descent step maintains the same as that of the ascent step.
	Our strategy is easy to implement and flexible enough to integrate with SAM and its variants, almost at no computational cost.
	With elementary tools from convex optimization and learning theory, we also conduct a theoretical analysis of sharpness-aware training, revealing that compared to SGD, the effectiveness of SAM is only assured in a limited regime of learning rate.
	In contrast, we show how SSAM extends this regime of learning rate and then it can consistently perform better than SAM with the minor modification.
	Finally, we demonstrate the improved performance of SSAM on several representative data sets and tasks.
\end{abstract}

\begin{keywords}
	Deep neural networks, sharpness-aware minimization, expected risk analysis, uniform stability, stochastic optimization
\end{keywords}

\section{Introduction}
Over the last decade, deep neural networks have been successfully deployed in a variety of domains, ranging from object detection \citep{redmon2016you}, machine translation \citep{dai2019transformer}, to mathematical reasoning \citep{davies2021advancing}, and protein folding \citep{jumper2021highly}.
Generally, deep neural networks are applied to approximate an underlying function that fits the training set well. In the realm of supervised learning, this is equivalent to solving an unconstrained optimization problem
\begin{equation*}
	\label{eq: finite sum}
	\min_{\boldsymbol{w}} F_S(\boldsymbol{w}) = \frac{1}{n} \sum_{i=1}^n f(\boldsymbol{w}, z_i),
\end{equation*}
where $f$ represents the per-example loss, $\boldsymbol{w}\in\mathbb{R}^d$ denotes the parameters of the deep neural network, and $n$ feature/label pairs $z_i=(x_i, y_i)$ constitute the training set $S$.
Often, we assume each example is i.i.d.~generated from an unknown data distribution $\mathfrak{D}$.
Since deep neural networks are usually composed of many hidden layers and have millions (even billions) of learnable parameters, it is quite a challenging task to search for the optimal values in such a high-dimensional space.
\par
In practice, due to the limited memory and time, we cannot save the gradients of each example in the training set and then apply determined methods such as gradient descent (GD) to train deep neural networks.
Instead, we use only a small subset (so-called mini-batch) of the training examples to estimate the full-batch gradient. Then, we employ stochastic gradient-based methods to make training millions (even billions) of parameters feasible.
However, the generalization ability of the solutions can vary with different training hyperparameters and optimizers. For example, \citet{jastrzkebski2017three,keskar2016large,he2019control} argued that training neural networks with a larger ratio of learning rate to mini-batch size tends to find solutions that generalize better.
Meanwhile, \citet{wilson2017marginal,zhou2020towards} also pointed out that the solutions found by adaptive optimization methods such as Adam \citep{kingma2014adam} and AdaGrad \citep{duchi2011adaptive} often generalize significantly worse than SGD \citep{bottou2018optimization}.
Although the relationship between optimization and generalization remains not fully understood \citep{choi2019empirical,dahl2023benchmarking}, it is generally appreciated that solutions recovered from the flat regions of the loss landscape generalize better than those landing in sharp regions \citep{keskar2016large,chaudhari2019entropy, jastrzebski2021catastrophic, kaddour2022flat}.
This can be justified from the perspective of the minimum description length principle that fewer bits of information are required to describe a flat minimum \citep{hinton1993keeping}, which, as a result, leads to stronger robustness against distribution shift between training data and test data.
\par
Based on this observation, different approaches are proposed towards finding flatter minima, amongst which sharpness-aware minimization (SAM) \citep{foret2020sharpness} substantially improves the generalization and attains state-of-art results on large-scale models such as vision transformers \citep{chen2021vision} and language models \citep{bahri2022sharpness}.
{Unlike standard training that minimizes the loss of the current weight $\boldsymbol{w}_t$, SAM  minimizes the loss of the perturbed weight
	$$\boldsymbol{w}_t^{asc} = \boldsymbol{w}_t + \rho {\nabla F_{\Omega_t}(\boldsymbol{w}_t)},$$ where $\Omega_t$ is a mini-batch of $S$ at $t$-th step and $\rho$ is a predefined constant\footnote{It is worth noting that different from the standard formulation of SAM \citep{foret2020sharpness}, here we drop the normalization term and adopt the unnormalized version \citep{andriushchenko2022towards} for analytical simplicity. While there are some disputes that this simplification sometimes would hurt the algorithmic performance \citep{dai2024crucial, long2024sharpness}, we hypothesize that this is because their analysis is based on GD rather than on SGD. Moreover, the empirical results in Section \ref{sec:Image Classification from Scratch} and from \citet{andriushchenko2022towards} also suggest that the normalization term is not necessary for improving generalization. To avoid ambiguity, we refer to the standard formulation of SAM proposed by \citet{foret2020sharpness} as $\mathrm{SAM}^{\ast}$ where necessary.}.
	\begin{figure}[t]
		\centering
		\begin{subfigure}[b]{0.328\textwidth}
			\centering
			\includegraphics[width=\textwidth, clip, trim= 0 0 0 0]{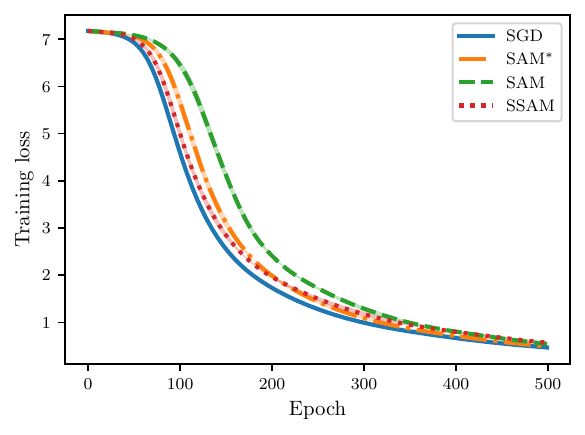}
			\caption{$\rho=0.02$}
		\end{subfigure}
		\begin{subfigure}[b]{0.328\textwidth}
			\centering
			\includegraphics[width=\textwidth, clip, trim= 0 0 0 0]{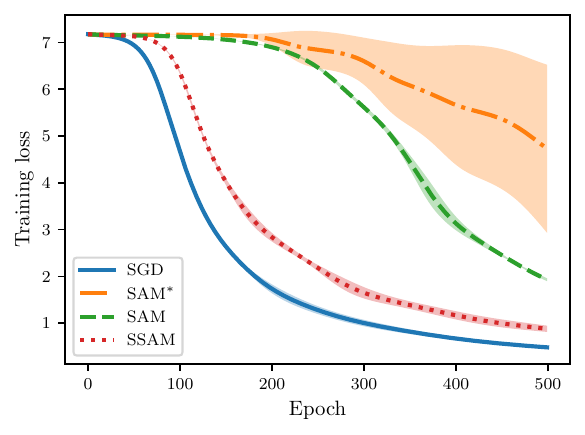}
			\caption{$\rho=0.05$}
		\end{subfigure}
		\begin{subfigure}[b]{0.328\textwidth}
			\centering
			\includegraphics[width=\textwidth, clip, trim= 0 0 0 0]{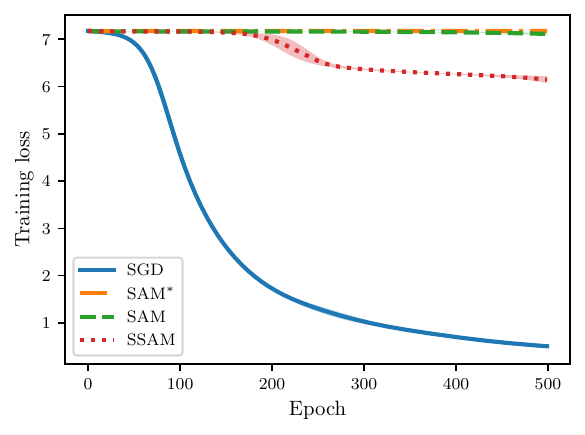}
			\caption{$\rho=0.08$}
		\end{subfigure}
		\caption{Loss curves of different optimizers to escape from the saddle point (namely, the origin) under different values of $\rho$.
			Following \citet{compagnoni2023sde}, we approximate the identity matrix of dimension $d=20$ as the product of two square matrices and initialize them with elements sampled from $\mathcal{N}(0, 1.0e^{-4})$. 
			We then train the linear autoencoder with different optimizers up to 500 epochs using a constant learning rate of $1.0e^{-3}$.
		}
		\label{fig: escaping saddle points}
	\end{figure}
	Despite the potential benefit of improved generalization, however, this unusual operation also brings about one critical issue during training.
	Compared to SGD, as pointed out by \citet{compagnoni2023sde} and \citet{kim2023stability}, SAM dynamics are easier to become trapped in the saddle points and require much more time to escape from them.
	To see this, let us take the linear autoencoder described in \citet{kunin2019loss} as an example.
	It is known that there is a saddle point of the loss function near the origin and here we compare the escaping efficiency of different optimizers.
	As shown in Figure \ref{fig: escaping saddle points}, we can observe that both SAM and $\mathrm{SAM}^\ast$ indeed require more time than SGD to escape from this point and become slower and slower as we gradually increase $\rho$ up to not being able to escape anymore.
}
\par
To stabilize training neural networks with SAM and its variants, here we propose a simple yet effective strategy by rescaling the gradient norm at point $\boldsymbol{w}_t^{asc}$ to the same magnitude as the gradient norm at point $\boldsymbol{w}_t$.
In brief, our contributions can be summarized as follows:
\begin{enumerate}
	\item We proposed a strategy, dubbed Stable SAM (SSAM), to stabilize training deep neural networks with SAM optimizer. 
	Our strategy is easy to implement and flexible enough to be integrated with any other SAM variants, almost at no computational cost.
	Most importantly, our strategy does not introduce any additional hyperparameter, tuning which is quite time-consuming in the context of sharpness-based optimization.
	\item We theoretically analyzed the benefits of SAM over SGD in terms of algorithmic stability \citep{hardt2016train} and found that the superiority of SAM is only assured in a limited regime of learning rate.
	We further extended the study to SSAM and showed that it allows for a larger learning rate and can consistently perform better than SAM under a mild condition. 
	\item We empirically validated the capability of  SSAM to stabilize sharpness-aware training and demonstrated its improved generalization performance in real-world problems. 
\end{enumerate}
The remainder of the study is organized as follows.
Section~\ref{sec:related-works} reviews the related literature, while Section~\ref{sec:methodology} elaborates on the details of the renormalization strategy.
Section~\ref{sec:theoretical analysis} then provides a theoretical analysis of SAM and SSAM from the perspective of expected excess risk.
Finally, before concluding the study, Section~\ref{sec:experiments} presents the experimental results.
\section{Related Works}
\label{sec:related-works}
Building upon the seminal work of SAM \citep{foret2020sharpness}, numerous algorithms have been proposed, most of which can be classified into two categories.
\par
The first category continues to improve the generalization performance of SAM. 
By stretching/shrinking the neighborhood ball according to the magnitude of parameters,  ASAM \citep{kwon2021asam} strengthens the connection between sharpness and generalization, which might break up due to model reparameterization.
Similarly, instead of defining the neighborhood ball in the Euclidean space, FisherSAM \citep{kim2022fisher} runs the SAM update on the statistical manifold induced by the Fisher information matrix.
Since one-step gradient ascent may not suffice to accurately approximate the solution of the inner maximization, RSAM \citep{liu2022random} was put forward by smoothing the loss landscape with Gaussian filters. 
This approach is similar to \citet{haruki2019gradient,bisla2022low}, both of which aim to flatten the loss landscape by convoluting the loss function with stochastic noise.
To separate the goal of minimizing the training loss and sharpness, GSAM \citep{zhuang2021surrogate} was developed to seek a region with both small loss and low sharpness.
Contrary to imposing a common weight perturbation within each mini-batch, $\delta$-SAM \citep{zhou2022sharpness} uses an approximate per-example perturbation with a theoretically principled weighting factor.
\par
The second category is devoted to reducing the computational cost because SAM involves two gradient backpropagations at each iteration.
An early attempt is LookSAM \citep{liu2022towards}, which runs a SAM update every few iterations.
Another strategy is RST \citep{zhao2022randomized}, according to which SAM and standard training are randomly switched with a scheduled probability.
Inspired by the local quadratic structure of the loss landscape, SALA \citep{tan2024sharpness} uses SAM only at the terminal phase of training when the distance between two consecutive steps is smaller than a threshold.
Similarly, AESAM\citep{jiang2022adaptive} designs an adaptive policy to apply SAM update only in the sharp regions of the loss landscape. 
ESAM \citep{du2021efficient} and Sparse SAM \citep{mi2022make} both attempt to perturb a subset of parameters to estimate the sharpness measure, while KSAM \citep{ni2022k} applies the SAM update to the examples with the highest loss.
Another intriguing approach is SAF \citep{du2022sharpness}, which accelerates the training process by replacing the sharpness measure with a trajectory loss.
However, this approach is heavily memory-consuming as it requires saving the output history of each example.
\par
In contrast to these studies, our approach concentrates on improving the training stability of sharpness-aware optimization, functioning as a plug-and-play component for SAM and its variants. 
Despite its simplicity, our approach is shown to be more robust with large learning rates and can achieve similar or even superior generalization performance compared to the vanilla SAM.
\section{Methodology}
\label{sec:methodology}
{
	While there exist some disputes \citep{dinh2017sharp, wen2024sharpness}, it is widely appreciated that flat minima empirically generalize better than sharp ones \citep{keskar2016large,chaudhari2019entropy,kaddour2022flat}.
	Motivated by this, SAM actively biases the training towards the flat regions of the loss landscape and seeks a neighborhood with low training losses.
	In practice, after a series of Taylor approximations, each SAM iterate can be decomposed into two steps,
	\begin{equation*}
		\begin{aligned}
			\boldsymbol{w}_t^{asc} = \boldsymbol{w}_t + \rho {\nabla F_{\Omega_t}(\boldsymbol{w}_t) }, \quad
			\boldsymbol{w}_{t+1} = \boldsymbol{w}_t - \eta\nabla F_{\Omega_t}(\boldsymbol{w}_t^{asc}),
		\end{aligned}
	\end{equation*}
	where $\Omega_t$ is a mini-batch of $S$ at $t$-th step, $\rho>0$ is the perturbation radius, and $\eta$ is the learning rate.
	By first ascending the weight along $\nabla F_{\Omega_t}(\boldsymbol{w}_t) $ and then descending it along $\nabla F_{\Omega_t}(\boldsymbol{w}_t^{asc})$, SAM penalizes the gradient norm \citep{zhao2022penalizing, compagnoni2023sde} and consistently minimizes the worse-case loss within the neighborhood, making the found solution more robust to distribution shift and consequently yielding a better generalization.
	\par
	In contrast to SGD, however, SAM faces a higher risk of getting trapped in the saddle points \citep{compagnoni2023sde, kim2023stability}, which may result in suboptimal outcomes \citep{du2017gradient, kleinberg2018alternative}.
	To gain some quantitative insights into how SAM exacerbates training stability,  let us consider the following function \citep{lucchi2021second},
	\begin{equation*}
		f(x_1, x_2) = \frac{1}{4} x_1^4 - x_1 x_2 + \frac{1}{2} x_2^2,
	\end{equation*}
	which has a strict saddle point at $(0, 0)$ and two global minima at $(-1, -1)$ and $(1, 1)$. 
	Given a random starting point, we want to know whether the training process can converge to one of the global minima.
	\begin{figure}[t]
		\centering
		\begin{subfigure}[b]{0.49\textwidth}
			\centering
			\includegraphics[width=\textwidth, clip, trim= 0 0 0 0]{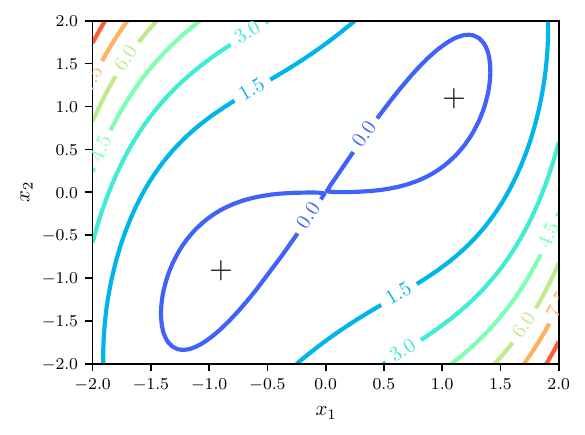}
			\caption{Loss landscape}
		\end{subfigure}
		\begin{subfigure}[b]{0.49\textwidth}
			\centering
			\includegraphics[width=\textwidth, clip, trim= 0 0 0 0]{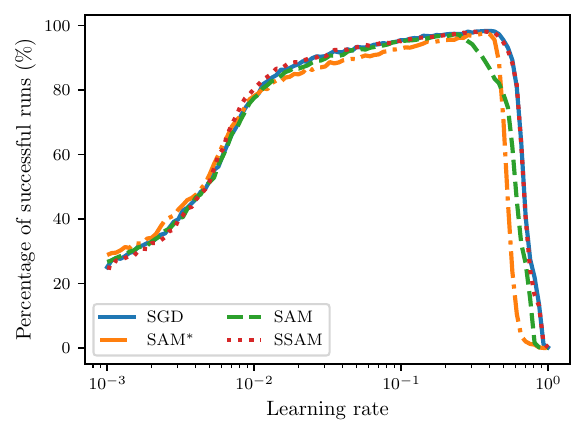}
			\caption{$\rho=0.05$}
		\end{subfigure}\\
		\begin{subfigure}[b]{0.49\textwidth}
			\centering
			\includegraphics[width=\textwidth, clip, trim= 0 0 0 0]{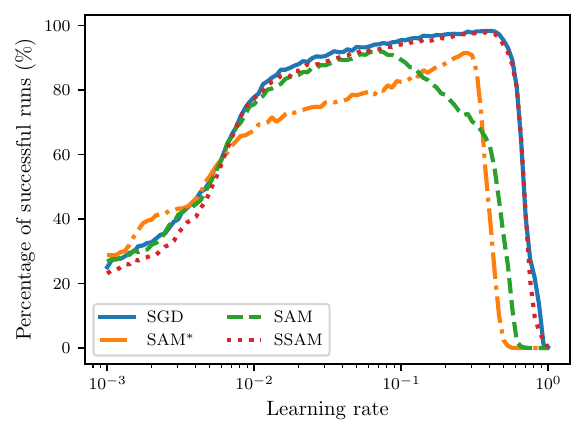}
			\caption{$\rho=0.2$}
		\end{subfigure}
		\begin{subfigure}[b]{0.49\textwidth}
			\centering
			\includegraphics[width=\textwidth, clip, trim= 0 0 0 0]{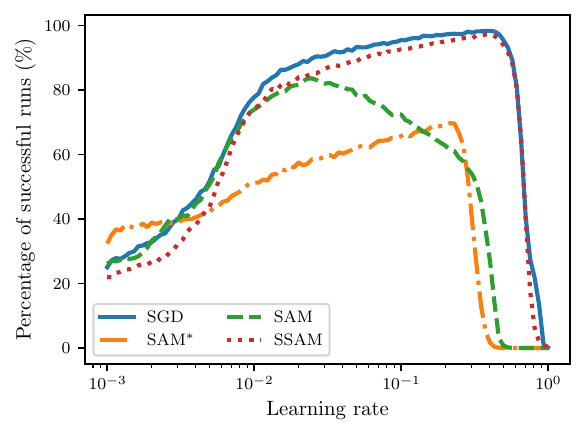}
			\caption{$\rho=0.4$}
		\end{subfigure}
		\caption{(a) Contour plot of  function $f(x_1, x_2) =  x_1^4/4 - x_1 x_2 + x_2^2/2$ and the symbol $(+)$ marks the global minima at $(-1, -1)$ and $(1, 1)$, respectively. (b) - (d) exhibit the rate of successful training as a function of the learning rate for different optimizers and perturbation radius $\rho$. Notice that the curve of SGD remains the same throughout these subplots since it does not depend on $\rho$.}
		\label{fig: toy example}
	\end{figure}
	\begin{algorithm}[t]
		\caption{SSAM Optimizer}
		\begin{algorithmic}[1]
			\renewcommand{\algorithmicrequire}{ \textbf{Input:}}
			\REQUIRE Training set $S=\{(x_i, y_i)\}_{i=1}^n$, objective function $F_S(\boldsymbol{w})$, initial weight $\boldsymbol{w}_0\in\mathbb{R}^d$, learning rate $\eta>0$, perturbation radius $\rho>0$, training iterations $T$, and base optimizer $\mathcal{A}$ (e.g. SGD)
			\renewcommand{\algorithmicensure}{ \textbf{Output:}}
			\ENSURE $\boldsymbol{w}_T$
			\FOR{$t=0, 1, \cdots, T-1$}
			\STATE Sample a mini-batch $\Omega_t=\{(x_{t_1}, y_{t_1}), \cdots, (x_{t_b}, y_{t_b})\}$;
			\STATE Compute gradient $\boldsymbol{g}_t=\nabla_{\boldsymbol{w}} F_{\Omega_t}(\boldsymbol{w})|_{\boldsymbol{w}=\boldsymbol{w}_t}$ of the loss over $\Omega_t$;
			\STATE Compute perturbed weight $\boldsymbol{w}^{asc}_t = \boldsymbol{w}_t + \rho \boldsymbol{g}_t$;
			\STATE Compute gradient $\boldsymbol{g}^{asc}_t=\nabla_{\boldsymbol{w}} F_{\Omega_t}(\boldsymbol{w})|_{\boldsymbol{w}=\boldsymbol{w}^{asc}_t}$ of the loss over the same $\Omega_t$;
			\STATE \textbf{Renormalize gradient as $\boldsymbol{g}^{asc}_t =  \frac{\|\boldsymbol{g}_t\|_2} { \|\boldsymbol{g}^{asc}_t\|_2}\boldsymbol{g}^{asc}_t$;}
			\STATE Update weight with base optimizer $\mathcal{A}$, e.g. $\boldsymbol{w}_{t+1} = \boldsymbol{w}_t - \eta \boldsymbol{g}^{asc}_t$;
			\ENDFOR
		\end{algorithmic}
		\label{algo: stablesam v1}
	\end{algorithm}
	\par
	For this purpose, we select 100 different learning rates that are equispaced between 0.001 and 0.3 on the logarithm scale. For each learning rate, we then uniformly sample 10000 random points from the square $[-2, 2]\times [-2, 2]$ and report the total percentage of runs that eventually converge to the global minima.
	We mark the runs that get stuck in the saddle point or fail to converge as unsuccessful runs.
	To introduce stochasticity during training, we manually perturb the gradient with zero-mean Gaussian noise with a variance of $0.005$.
	As shown in Figure \ref{fig: toy example}, the failed percentage of SAM and  $\mathrm{SAM}^\ast$ first blow up when we gradually increase the learning rate, suggesting that a smaller learning rate is necessary for sharpness-aware training to ensure convergence.
	Moreover, we can observe that SGD always achieves the highest rate of successful training, while $\mathrm{SAM}^\ast$ is the most unstable optimizer.
	Notice that the stability of sharpness-aware training also heavily relies on the perturbation radius $\rho$.
	Often, a larger $\rho$ corresponds to a lower percentage of successful runs.
	This indicates that both SAM and $\mathrm{SAM}^\ast$ become more and more difficult to escape from the saddle point $(0, 0)$.
	\par
	To address this issue, we propose a simple strategy, dubbed SSAM, to improve the stability of sharpness-aware training.
	As shown in Algorithm \ref{algo: stablesam v1}\footnote{A PyTorch implementation is available at \url{https://github.com/cltan023/stablesam2024}.}, the only difference from SAM is that we include an extra renormalization step (line 6) to ensure that the gradient norm of the descent step does not exceed that of the ascent step.
	The ratio, $\gamma_t=\|\nabla F_{\Omega_t} (\boldsymbol{w}_t)\|_2 / \|\nabla F_{\Omega_t} (\boldsymbol{w}_t^{asc})\|_2$, which we refer to as the \emph{renormalization factor}, can be interpreted as follows.
	When $\|\nabla F_{\Omega_t} (\boldsymbol{w}_t^{asc})\|_2$ is larger than $\|\nabla F_{\Omega_t} (\boldsymbol{w}_t)\|_2$,  we downscale the norm of $\nabla F_{\Omega_t} (\boldsymbol{w}_t^{asc})$ to ensure that the iterates move in a smaller step towards the flat regions and thus we can reduce the chance of fluctuation and divergence.
	In contrast, when $\|\nabla F_{\Omega_t} (\boldsymbol{w}_t^{asc})\|_2$ is smaller than $\|\nabla F_{\Omega_t} (\boldsymbol{w}_t)\|_2$, a situation that may occur near the saddle points, we upscale the norm of $\nabla F_{\Omega_t} (\boldsymbol{w}_t^{asc})$ to incur a larger perturbation to improve the escaping efficiency.
	\par
	After applying the renormalization strategy, as shown in Figure \ref{fig: toy example}, we can observe that this issue can be remedied to a large extent because the curve of SSAM now remains approximately the same as SGD even for large learning rates.
	Similar results for realistic neural networks can also be found in Appendix \ref{sec: Training Instability on Realistic Neural Networks}.
	It should be clarified that the analysis here is not from the generalization perspective, but instead from the optimization perspective only.
	Compared to SAM, our approach does not introduce any additional hyperparameter so that it can be integrated with other SAM variants almost at no computational cost.
}
\section{Theoretical Analysis}
\label{sec:theoretical analysis}
The generalization ability of sharpness-aware training was initially studied by the PAC-Bayesian theory \citep{foret2020sharpness,yue2023sharpness,zhuang2021surrogate}.
This approach, however, is fundamentally limited since the generalization bound is focused on the worst-case perturbation rather than the realistic one-step ascent approximation \citep{wen2022does}.
For a certain class of problems, an analysis from the perspective of implicit bias suggests that SAM can always choose a better solution than SGD \citep{andriushchenko2022towards}.
In the small learning rate regime, \citet{compagnoni2023sde} further characterized the continuous-time models for SAM in the form of a stochastic differential equation and concluded that SAM is attracted to saddle points under some realistic conditions, an observation which has also been unveiled by \citet{kim2023stability}.
Moreover, \citet{bartlett2023dynamics} argued that SAM converges to a cycle that oscillates between the minimum along the principal direction of the Hessian of the loss function.
Different from these studies, here we investigate the generalization performance of SAM via algorithmic stability \citep{bousquet2002stability, hardt2016train} and together with its convergence properties present an upper bound over its expected excess risk.
We first show that SAM consistently generalizes better than SGD, though a much smaller learning rate is required.
Finally, we show how our proposed method, SSAM, extends the regime of learning rate and can achieve a better generalization performance than SAM.

\subsection{Notations and Preliminaries}
Let $X\subset\mathbb{R}^p$ and $Y\subset\mathbb{R}$ denote the feature and label space, respectively.
We consider a training set $S$ of $n$ examples, each of which is randomly sampled from an unknown distribution $\mathfrak{D}$ over the data space $Z = X \times Y$.
Given a learning algorithm $\mathcal{A}$, it learns a hypothesis that relates the input $x\in X$ to the output $y\in Y$.
For deep neural networks, the learned hypothesis is parameterized by the network parameters $\boldsymbol{w}\in\mathbb{R}^d$.
\par
Suppose $f(\boldsymbol{w}, z): \mathbb{R}^d\times Z \mapsto \mathbb{R}_{+}$ is a non-negative cost function, we then can define the \emph{population risk}
\begin{equation*}
	F_{\mathfrak{D}}(\boldsymbol{w}) = \mathbb{E}_{z\sim\mathfrak{D}}\left[f(\boldsymbol{w}, z)\right],
\end{equation*}
and the \emph{empirical risk}
\begin{equation*}
	F_S(\boldsymbol{w}) = \frac{1}{n}\sum_{i=1}^n f(\boldsymbol{w}, z_i).
\end{equation*}
In practice, we cannot compute $F_{\mathfrak{D}}(\boldsymbol{w})$ directly since the data distribution $\mathfrak{D}$ is unknown.
However, once the training set $S$ is given, we have access to its estimation and can minimize the empirical risk $F_S(\boldsymbol{w})$ instead, a process which is often referred to as \emph{empirical risk minimization}.
Let $\boldsymbol{w}_{\mathcal{A}, S}$ be the output returned by minimizing the empirical risk $F_S(\boldsymbol{w})$ with learning algorithm $\mathcal{A}$, and $\boldsymbol{w}_{\mathfrak{D}}^*$ be one minimizer of the population risk $F_{\mathfrak{D}}(\boldsymbol{w})$, namely, $\boldsymbol{w}_{\mathfrak{D}}^* \in\arg\min_{\boldsymbol{w}} F_{\mathfrak{D}}(\boldsymbol{w})$.
Since $\boldsymbol{w}_{\mathcal{A}, S}$ in high probability will not be the same with $\boldsymbol{w}_{\mathfrak{D}}^*$, we are interested in how far $\boldsymbol{w}_{\mathcal{A}, S}$ deviates from $\boldsymbol{w}_{\mathfrak{D}}^*$ when evaluated on an unseen example $z\sim\mathfrak{D}$.
\par
A natural measure to quantify this difference is the so-called \emph{expected excess risk}, 
\begin{equation*}
	\begin{aligned}
		\varepsilon_{exc} &= \mathbb{E}\left[F_{\mathfrak{D}}(\boldsymbol{w}_{\mathcal{A}, S}) - F_{\mathfrak{D}}(\boldsymbol{w}_{\mathfrak{D}}^*)\right] \\
		& = \underbrace{\mathbb{E}\left[F_{\mathfrak{D}}(\boldsymbol{w}_{\mathcal{A}, S}) - F_S(\boldsymbol{w}_{\mathcal{A}, S})\right]}_{\varepsilon_{gen}} + \underbrace{\mathbb{E}\left[F_S(\boldsymbol{w}_{\mathcal{A}, S}) - F_S(\boldsymbol{w}_S^*)\right]}_{\varepsilon_{opt}} + \underbrace{\mathbb{E}\left[F_S(\boldsymbol{w}_S^*) - F_{\mathfrak{D}}(\boldsymbol{w}_{\mathfrak{D}}^*)\right]}_{\varepsilon_{approx}},
	\end{aligned}
\end{equation*}
where $\boldsymbol{w}_S^* \in \arg\min_{\boldsymbol{w}} F_S(\boldsymbol{w})$.
Since $\boldsymbol{w}_{\mathfrak{D}}^*$ remains constant for the population risk $F_{\mathfrak{D}}(\boldsymbol{w})$ which depends only on the data distribution and loss function, it follows that the \emph{expected approximation error} $\varepsilon_{approx}=\mathbb{E}\left[F_S(\boldsymbol{w}_S^*) - F_{\mathfrak{D}}(\boldsymbol{w}_{\mathfrak{D}}^*)\right]=\mathbb{E}\left[F_S(\boldsymbol{w}_S^*) - F_{S}(\boldsymbol{w}_{\mathfrak{D}}^*)\right]\leq 0$.
Therefore, it often suffices to obtain tight control of the \emph{expected excess risk} $\varepsilon_{exc}$ by bounding the \emph{expected generalization error}\footnote{It is worth noting that the difference between the test error and the training error in some literature is referred to as \emph{generalization gap} and the test error alone goes by \emph{generalization error}.} $\varepsilon_{gen}$ and the \emph{expected optimization error} $\varepsilon_{opt}$.
\par
For learning algorithms based on iterative optimization, $\varepsilon_{opt}$ in many cases can be analyzed via a convergence analysis \citep{bubeck2015convex}.
Meanwhile, to derive an upper bound over $\varepsilon_{gen}$, we can use the following theorem, which is due to \citet{hardt2016train}, indicating that the generalization error could be bounded via the uniform stability \citep{bousquet2002stability}.
Indeed, the uniform stability characterizes how sensitive the output of the learning algorithm $\mathcal{A}$ is when a single example in the training set $S$ is modified.
\begin{theorem}[Generalization error under $\varepsilon$-uniformly stability]
	Let $S$ and $S^\prime$ denote two training sets i.i.d.~sampled from the same data distribution $\mathfrak{D}$ such that $S$ and $S^\prime$ differ in at most one example. A learning algorithm $\mathcal{A}$ is $\varepsilon$-uniformly stable if and only if for all samples $S$ and $S^\prime$, the following inequality holds
	\begin{equation*}
		\sup_{z} \mathbb{E}|f(\boldsymbol{w}_{\mathcal{A}, S}, z) - f(\boldsymbol{w}_{\mathcal{A}, S^\prime}, z)| \leq \varepsilon.
	\end{equation*}
	Furthermore, if $\mathcal{A}$ is $\varepsilon$-uniformly stable, the expected generalization error $\varepsilon_{gen}$ is upper bounded by $\varepsilon$, namely,
	\begin{equation*}
		\mathbb{E}\left[F_{\mathfrak{D}}(\boldsymbol{w}_{\mathcal{A}, S}) - F_S (\boldsymbol{w}_{\mathcal{A}, S})\right] \leq \varepsilon.
	\end{equation*}
\end{theorem}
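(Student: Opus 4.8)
The plan is to prove the generalization bound by the classical renaming (or ``ghost sample'') argument, which rewrites the expected population risk as an average over leave-one-out replacements so that the uniform-stability hypothesis can be applied termwise. First I would introduce an auxiliary sample $S' = (z_1', \dots, z_n')$ drawn i.i.d.\ from $\mathfrak{D}$ and independent of $S = (z_1, \dots, z_n)$, and for each index $i$ define the hybrid training set
\begin{equation*}
	S^{(i)} = (z_1, \dots, z_{i-1}, z_i', z_{i+1}, \dots, z_n),
\end{equation*}
which agrees with $S$ everywhere except that its $i$-th entry is replaced by $z_i'$. By construction $S$ and $S^{(i)}$ differ in exactly one example, which is precisely the neighboring-set configuration in which $\varepsilon$-uniform stability is defined.

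The key identity I would establish is that, for every fixed $i$,
\begin{equation*}
	\mathbb{E}\left[F_{\mathfrak{D}}(\boldsymbol{w}_{\mathcal{A}, S})\right] = \mathbb{E}\left[f(\boldsymbol{w}_{\mathcal{A}, S^{(i)}}, z_i)\right],
\end{equation*}
where the expectations run over $S$, $S'$, and any internal randomness of $\mathcal{A}$. This holds because in $S^{(i)}$ the point $z_i$ has been removed from the training data, so it is independent of $\boldsymbol{w}_{\mathcal{A}, S^{(i)}}$ and behaves as a fresh test draw from $\mathfrak{D}$; relabeling the exchangeable pair $z_i \leftrightarrow z_i'$ turns $S^{(i)}$ into $S$ and the evaluation point $z_i$ into $z_i'$, so the right-hand side becomes $\mathbb{E}_S\,\mathbb{E}_{z'\sim\mathfrak{D}}[f(\boldsymbol{w}_{\mathcal{A},S}, z')] = \mathbb{E}_S[F_{\mathfrak{D}}(\boldsymbol{w}_{\mathcal{A},S})]$. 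Averaging this identity over $i$ and subtracting the elementary expression $\mathbb{E}[F_S(\boldsymbol{w}_{\mathcal{A},S})] = \frac{1}{n}\sum_{i=1}^n \mathbb{E}[f(\boldsymbol{w}_{\mathcal{A},S}, z_i)]$ that follows directly from the definition of $F_S$ gives
\begin{equation*}
	\varepsilon_{gen} = \frac{1}{n}\sum_{i=1}^n \mathbb{E}\left[f(\boldsymbol{w}_{\mathcal{A}, S^{(i)}}, z_i) - f(\boldsymbol{w}_{\mathcal{A}, S}, z_i)\right].
\end{equation*}

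To finish, I would bound each summand by the stability constant. Conditioning first on the data, the realized $S$ and $S^{(i)}$ are fixed neighboring datasets and $z_i$ is a fixed admissible evaluation point, so the defining inequality (after the sup over $z$, with expectation over the algorithmic randomness) yields $\mathbb{E}|f(\boldsymbol{w}_{\mathcal{A},S^{(i)}}, z_i) - f(\boldsymbol{w}_{\mathcal{A},S}, z_i)| \le \varepsilon$; taking the outer expectation over $(S, S')$ and the triangle inequality then bounds every term by $\varepsilon$, whence $\varepsilon_{gen} \le \varepsilon$. I expect the main obstacle to be the renaming step: one must argue carefully that $z_i$ may be treated as an independent test point for the hypothesis trained on $S^{(i)}$, which relies on the i.i.d.\ assumption and the exchangeability of $z_i$ and $z_i'$, and — when $\mathcal{A}$ is randomized, as with mini-batch SAM/SGD — on interpreting the stability definition with a shared coupling of the algorithmic randomness across $S$ and $S^{(i)}$. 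The secondary point requiring care is the order of operations in the last step, namely conditioning on the data before invoking $\sup_z$, so that the random evaluation point $z_i$ does not invalidate the uniform bound.
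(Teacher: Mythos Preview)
Your argument is the standard and correct one: the ghost-sample/renaming trick that writes $\varepsilon_{gen}$ as an average of leave-one-out differences and then bounds each term by the uniform-stability constant. The identity $\mathbb{E}[F_{\mathfrak{D}}(\boldsymbol{w}_{\mathcal{A},S})] = \mathbb{E}[f(\boldsymbol{w}_{\mathcal{A},S^{(i)}}, z_i)]$ via the swap $z_i \leftrightarrow z_i'$ is exactly the crux, and your care about conditioning on the data before taking $\sup_z$ is well placed.

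Note, however, that the paper does \emph{not} supply its own proof of this theorem: it is quoted as a known result due to \citet{hardt2016train} (going back to \citet{bousquet2002stability}) and is used as a black box in the subsequent stability analysis of SAM and SSAM. Your proposal is precisely the argument given in those references, so there is nothing to compare against in the present paper.
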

To ease notation, we use $f(\boldsymbol{w})$ interchangeably with $f(\boldsymbol{w}, z)$ in the sequel as long as it is clear from the context that $z$ is being held constant or can be understood from prior information.
\subsection{Expected Excess Risk Analysis of SAM}
In this section, we first investigate the stability of SAM and then its convergence property, together yielding an upper bound over the expected excess risk $\varepsilon_{exc}$.
We restrict our attention to the strongly convex case so that we can compare against known results, particularly from \citet{hardt2016train}.
\subsubsection{Stability}
Consider the optimization trajectories $\boldsymbol{w}_0, \boldsymbol{w}_1, \cdots, \boldsymbol{w}_T$ and $\boldsymbol{v}_0, \boldsymbol{v}_1, \cdots, \boldsymbol{v}_T$ induced by running SAM for $T$ steps on sample $S$ and $S^\prime$, which differ from each other only by one example.
Suppose that the loss function $f(\boldsymbol{w}, z)$ is $G$-Lipschitz with respect to the first argument, then it holds for all $z \in Z$ that
\begin{equation}
	\label{eq: stability lipschitz inequality}
	| f(\boldsymbol{v}_T, z) - f(\boldsymbol{w}_T, z)   | \leq G \| \boldsymbol{v}_T - \boldsymbol{w}_T   \|_2.
\end{equation}
Therefore, the remaining step in our setup is to upper bound $ \| \boldsymbol{v}_T - \boldsymbol{w}_T   \|_2$, which can be recursively controlled by the growth rate.
Since $S$ and $S^\prime$ differ in only one example, at every step $t$, the selected examples from $S$ and $S^\prime$, say $z$ and $z^\prime$, are either the same or not.
In the lemma below, we show that $ \|\boldsymbol{v}_t - \boldsymbol{w}_t \|_2$ is contracting when $z$ and $z^\prime$ are the same. 
\begin{lemma}
	\label{lemma: sam same example}
	Assume that the per-example loss function $f(\boldsymbol{w}, z)$ is $\mu$-strongly convex, $L$-smooth, and $G$-Lipschitz continuous with respect to the first argument $\boldsymbol{w}$. Suppose that at step $t$, the examples selected by SAM are the same in $S$ and $S^\prime$ and the update rules are denoted by
	$\boldsymbol{w}_{t+1} = \boldsymbol{w}_t - \eta \nabla f(\boldsymbol{w}_t^{asc}, z)$ and $\boldsymbol{v}_{t+1} = \boldsymbol{v}_t - \eta \nabla f(\boldsymbol{v}_t^{asc}, z)$, respectively.
	Then,  it follows that
	\begin{equation}
		\label{eq: sam same example}
		\|\boldsymbol{v}_{t+1} - \boldsymbol{w}_{t+1} \|_2 \leq \left(1 -  \left(1+\mu\rho\right) \frac{\eta\mu L}{\mu+L}\right)  \|\boldsymbol{v}_{t} - \boldsymbol{w}_{t} \|_2,
	\end{equation}
	where the learning rate $\eta$ satisfies that
	\begin{equation}
		\label{eq: learning rate constraint}
		\eta \leq \frac{2}{\mu + L} - \frac{\mu + L}{2\mu L (\mu/\rho L^2 + 1)}.
	\end{equation}
\end{lemma}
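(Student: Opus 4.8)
The plan is to follow the template of the uniform-stability argument of \citet{hardt2016train}: bound the one-step growth of $\boldsymbol{\Delta}_t := \boldsymbol{v}_t-\boldsymbol{w}_t$ by expanding the squared distance and taking square roots only at the very end. Writing $\boldsymbol{D} := \nabla f(\boldsymbol{v}_t^{asc},z)-\nabla f(\boldsymbol{w}_t^{asc},z)$, the update rules give $\boldsymbol{v}_{t+1}-\boldsymbol{w}_{t+1}=\boldsymbol{\Delta}_t-\eta\boldsymbol{D}$, so that
\[
\|\boldsymbol{v}_{t+1}-\boldsymbol{w}_{t+1}\|^2=\|\boldsymbol{\Delta}_t\|^2-2\eta\langle\boldsymbol{\Delta}_t,\boldsymbol{D}\rangle+\eta^2\|\boldsymbol{D}\|^2 .
\]
The whole proof then reduces to producing a lower bound on the cross term $\langle\boldsymbol{\Delta}_t,\boldsymbol{D}\rangle$ that both exposes the amplified contraction coefficient $(1+\mu\rho)\tfrac{\mu L}{\mu+L}$ in front of $\|\boldsymbol{\Delta}_t\|^2$ and leaves a negative-definite remainder in $\|\boldsymbol{D}\|^2$; applying $\sqrt{1-2a}\le 1-a$ then delivers \eqref{eq: sam same example}.

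The key structural tool is co-coercivity of a $\mu$-strongly convex, $L$-smooth function, namely $\langle \boldsymbol{x}-\boldsymbol{y},\nabla f(\boldsymbol{x})-\nabla f(\boldsymbol{y})\rangle\ge \tfrac{\mu L}{\mu+L}\|\boldsymbol{x}-\boldsymbol{y}\|^2+\tfrac{1}{\mu+L}\|\nabla f(\boldsymbol{x})-\nabla f(\boldsymbol{y})\|^2$. I would apply it at the \emph{ascent} points $\boldsymbol{w}_t^{asc},\boldsymbol{v}_t^{asc}$, for which $\boldsymbol{v}_t^{asc}-\boldsymbol{w}_t^{asc}=\boldsymbol{\Delta}_t+\rho\,\boldsymbol{\delta}$ with $\boldsymbol{\delta}:=\nabla f(\boldsymbol{v}_t,z)-\nabla f(\boldsymbol{w}_t,z)$. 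Decomposing $\langle\boldsymbol{\Delta}_t,\boldsymbol{D}\rangle=\langle\boldsymbol{v}_t^{asc}-\boldsymbol{w}_t^{asc},\boldsymbol{D}\rangle-\rho\langle\boldsymbol{\delta},\boldsymbol{D}\rangle$ and expanding $\|\boldsymbol{v}_t^{asc}-\boldsymbol{w}_t^{asc}\|^2$ via the strong-convexity estimate $\langle\boldsymbol{\Delta}_t,\boldsymbol{\delta}\rangle\ge\mu\|\boldsymbol{\Delta}_t\|^2$ produces an amplification factor $(1+2\mu\rho)$ in front of $\tfrac{\mu L}{\mu+L}\|\boldsymbol{\Delta}_t\|^2$, together with the retained curvature term $\tfrac{1}{\mu+L}\|\boldsymbol{D}\|^2$.

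The main obstacle is the residual coupling $-\rho\langle\boldsymbol{\delta},\boldsymbol{D}\rangle$ created by the ascent step: it mixes the gradient difference at the base points with the gradient difference at the perturbed points, and co-coercivity gives no sign control over it. Equivalently, writing $\boldsymbol{\delta}=H_1\boldsymbol{\Delta}_t$ and $\boldsymbol{D}=H_2(I+\rho H_1)\boldsymbol{\Delta}_t$ through the mean-value Hessians $H_1,H_2$ (symmetric, with eigenvalues in $[\mu,L]$), the product $H_2H_1$ is not symmetric and its quadratic form may be negative, so a naive eigenvalue argument fails. I would instead control it crudely through $|\langle\boldsymbol{\delta},\boldsymbol{D}\rangle|\le\|\boldsymbol{\delta}\|\,\|\boldsymbol{D}\|\le L\|\boldsymbol{\Delta}_t\|\,\|\boldsymbol{D}\|$ and split the resulting $\rho L\|\boldsymbol{\Delta}_t\|\|\boldsymbol{D}\|$ by Young's inequality $2\|\boldsymbol{\Delta}_t\|\|\boldsymbol{D}\|\le \alpha\|\boldsymbol{\Delta}_t\|^2+\alpha^{-1}\|\boldsymbol{D}\|^2$, choosing $\alpha=\tfrac{2\mu^2}{\mu+L}$ so that the $\|\boldsymbol{\Delta}_t\|^2$ coefficient collapses from $(1+2\mu\rho)$ to exactly $(1+\mu\rho)$.

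With $\alpha$ fixed this way, requiring the total $\|\boldsymbol{D}\|^2$ coefficient $\eta^2-\tfrac{2\eta}{\mu+L}+\eta\rho L/\alpha$ to be nonpositive is precisely what produces the learning-rate ceiling \eqref{eq: learning rate constraint}; retaining in addition the positive $\rho^2\|\boldsymbol{\delta}\|^2$ curvature contribution that I dropped above sharpens the constant to the stated $\mu/(\rho L^2)+1$ denominator. As a consistency check, letting $\rho\to0$ sends the correction to zero and recovers both the SGD contraction factor $\bigl(1-\tfrac{\eta\mu L}{\mu+L}\bigr)$ and the classical step-size bound $\eta\le 2/(\mu+L)$, which is the regime studied by \citet{hardt2016train}.
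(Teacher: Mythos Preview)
Your proposal is correct and follows essentially the same route as the paper: expand $\|\boldsymbol{v}_{t+1}-\boldsymbol{w}_{t+1}\|_2^2$, apply co-coercivity at the ascent points, lower-bound $\|\boldsymbol{v}_t^{asc}-\boldsymbol{w}_t^{asc}\|_2^2$ via strong convexity to extract the $(1+\mu\rho)$ amplification, absorb the residual cross term $\langle\boldsymbol{\delta},\boldsymbol{D}\rangle$, and finish with $\sqrt{1-2a}\le 1-a$. The only technical difference lies in how the cross term is absorbed: the paper first trades $\mu\rho\|\boldsymbol{\Delta}_t\|_2^2$ for $(\mu\rho/L^2)\|\boldsymbol{\delta}\|_2^2$ via $L$-smoothness and then completes the square directly in $(\boldsymbol{\delta},\boldsymbol{D})$, which delivers the step-size bound \eqref{eq: learning rate constraint} exactly; your variant, passing first from $\|\boldsymbol{\delta}\|_2$ to $L\|\boldsymbol{\Delta}_t\|_2$ and then applying Young in $(\boldsymbol{\Delta}_t,\boldsymbol{D})$, yields the slightly looser ceiling $\eta\le\tfrac{2}{\mu+L}-\tfrac{\rho L(\mu+L)}{2\mu^2}$. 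Note that merely ``retaining the $\rho^2\|\boldsymbol{\delta}\|_2^2$ contribution'' does not by itself close this gap, since your Young inequality produces no $\|\boldsymbol{\delta}\|_2^2$ term to cancel it against---to match the paper's constant you must apply Young (equivalently, complete the square) in $(\boldsymbol{\delta},\boldsymbol{D})$ rather than $(\boldsymbol{\Delta}_t,\boldsymbol{D})$, which is exactly what the paper does.
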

\begin{proof}
	To prove this result, we first would like to lower bound the term $ \|\boldsymbol{v}_{t}^{asc} - \boldsymbol{w}_t^{asc} \|_2^2$ as follows:
	\begin{equation*}
		\begin{aligned}
			\|\boldsymbol{v}_{t}^{asc} - \boldsymbol{w}_t^{asc} \|_2^2 &=  \|\boldsymbol{v}_{t} - \boldsymbol{w}_{t} \|_2^2 + 2\rho  \left<\boldsymbol{v}_{t} - \boldsymbol{w}_{t}, \nabla f(\boldsymbol{v}_t)-\nabla f(\boldsymbol{w}_t) \right> + \rho^2  \|\nabla f(\boldsymbol{v}_{t}) - \nabla f(\boldsymbol{w}_{t}) \|_2^2 \\
			&\geq (1 + 2\mu\rho)  \|\boldsymbol{v}_{t} - \boldsymbol{w}_{t} \|_2^2 + \rho^2  \|\nabla f(\boldsymbol{v}_{t}) - \nabla f(\boldsymbol{w}_{t}) \|_2^2 \\
			&\geq (1 + \mu\rho)  \|\boldsymbol{v}_{t} - \boldsymbol{w}_{t} \|_2^2 + ({\mu\rho}/{L^2} + \rho^2)  \|\nabla f(\boldsymbol{v}_{t}) - \nabla f(\boldsymbol{w}_{t}) \|_2^2.
		\end{aligned}
	\end{equation*}
	\par\noindent
	According to the update rule, we further have
	\begin{align*}
		& \|\boldsymbol{v}_{t+1} - \boldsymbol{w}_{t+1} \|_2^2 
		={}  \| \boldsymbol{v}_t - \eta\nabla f(\boldsymbol{v}_t^{asc}) -\left(\boldsymbol{w}_t - \eta\nabla f(\boldsymbol{w}_t^{asc})\right)  \|_2^2 \\
		&{}={}  \|\boldsymbol{v}_{t} - \boldsymbol{w}_{t} \|_2^2 - 2\eta \left<\boldsymbol{v}_{t} - \boldsymbol{w}_{t}, \nabla f(\boldsymbol{v}_t^{asc})-\nabla f(\boldsymbol{w}_t^{asc}) \right> + \eta^2 \|\nabla f(\boldsymbol{v}_t^{asc})-\nabla f(\boldsymbol{w}_t^{asc}) \|_2^2 \\
		& =\begin{aligned}[t]
			& \|\boldsymbol{v}_{t} - \boldsymbol{w}_{t} \|_2^2 - 2\eta \left<\boldsymbol{v}_{t}^{asc} - \boldsymbol{w}_{t}^{asc}, \nabla f(\boldsymbol{v}_t^{asc})-\nabla f(\boldsymbol{w}_t^{asc}) \right>\\ 
			&\quad+ 2\rho\eta \left<\nabla f(\boldsymbol{v}_{t}) - \nabla f(\boldsymbol{w}_{t}), \nabla f(\boldsymbol{v}_t^{asc})-\nabla f(\boldsymbol{w}_t^{asc}) \right> + \eta^2 \|\nabla f(\boldsymbol{v}_t^{asc})-\nabla f(\boldsymbol{w}_t^{asc}) \|_2^2 
		\end{aligned}\\
		&\stackrel{\textcircled{1}}{\leq} \begin{aligned}[t]
			&\left(1 - 2 \left(1+\mu \rho \right) \frac{\eta\mu L}{\mu+L}\right)  \|\boldsymbol{v}_{t} - \boldsymbol{w}_{t} \|_2^2 - 2\left(\frac{\mu\rho}{L^2} + \rho^2\right)\frac{\eta\mu L}{\mu+L} \|\nabla f(\boldsymbol{v}_t)-\nabla f(\boldsymbol{w}_t) \|_2^2\\
			&\quad+2\rho\eta \left<\nabla f(\boldsymbol{v}_{t}) - \nabla f(\boldsymbol{w}_{t}), \nabla f(\boldsymbol{v}_t^{asc})-\nabla f(\boldsymbol{w}_t^{asc}) \right> + \left(\eta^2- \frac{2\eta}{\mu + L}\right) \|\nabla f(\boldsymbol{v}_t^{asc})-\nabla f(\boldsymbol{w}_t^{asc}) \|_2^2 
		\end{aligned}\\
		&\stackrel{\textcircled{2}}{\leq} \begin{aligned}[t]
			&\left(1 - 2 \left(1+\mu \rho \right) \frac{\eta\mu L}{\mu+L}\right)  \|\boldsymbol{v}_{t} - \boldsymbol{w}_{t} \|_2^2 + \left[\frac{\rho^2\eta}{\frac{2}{\mu+L} - \eta} - 2\left(\frac{\mu\rho}{L^2} + \rho^2\right)\frac{\eta\mu L}{\mu+L}\right] \|\nabla f(\boldsymbol{v}_t)-\nabla f(\boldsymbol{w}_t) \|_2^2\\
			&\quad+ \left(\eta^2- \frac{2\eta}{\mu + L}\right)\left[\left(\nabla f\left(\boldsymbol{v}_t^{asc}\right)-\nabla f\left(\boldsymbol{w}_t^{asc}\right)\right) - \frac{\rho}{\frac{2}{\mu+L} - \eta}\left(\nabla f\left(\boldsymbol{v}_{t}\right) - \nabla f\left(\boldsymbol{w}_{t}\right)\right) \right]^2
		\end{aligned}\\
		&\stackrel{\textcircled{3}}{\leq}{}\left(1 - 2 \left(1+\mu\rho\right) \frac{\eta\mu L}{\mu+L}\right)  \|\boldsymbol{v}_{t} - \boldsymbol{w}_{t} \|_2^2,
	\end{align*}
	where $\textcircled{1}$ is due to the coercivity of the loss function (cf. Appendix \ref{appendix: coeveritty of strongly convex})  that
	\begin{equation*}
		\left<\nabla f(\boldsymbol{v}_t^{asc})-\nabla f(\boldsymbol{w}_t^{asc}), \boldsymbol{v}_{t}^{asc} - \boldsymbol{w}_t^{asc} \right> \geq \frac{\mu L}{\mu + L}  \|\boldsymbol{v}_{t}^{asc} - \boldsymbol{w}_t^{asc} \|_2^2 + \frac{1}{\mu + L} \|\nabla f(\boldsymbol{v}_t^{asc})-\nabla f(\boldsymbol{w}_t^{asc}) \|_2^2.
	\end{equation*}
	Moreover, \textcircled{3} holds since the last two terms of \textcircled{2} are smaller than zero provided that the learning rate $\eta$ satisfies the given condition.
	Consequently, we have
	\begin{equation*}
		\|\boldsymbol{v}_{t+1} - \boldsymbol{w}_{t+1} \|_2 \leq \left(1 - 2 \left(1+\mu\rho\right) \frac{\eta\mu L}{\mu+L}\right)^{1/2}  \|\boldsymbol{v}_{t} - \boldsymbol{w}_{t} \|_2\leq \left(1 -  \left(1+\mu\rho\right) \frac{\eta\mu L}{\mu+L}\right)  \|\boldsymbol{v}_{t} - \boldsymbol{w}_{t} \|_2,
	\end{equation*}
	where the last inequality is due to the fact that $\sqrt{1-x}\leq 1-x/2$ holds for all $x\in[0, 1]$.
\end{proof}
\begin{remark}
	To ensure that the learning rate $\eta$ is feasible, the right-hand side of \eqref{eq: learning rate constraint} should be at least larger than zero.
	This holds for any perturbation radius $\rho >0 $ if $\mu = L$.
	However, if $\mu < L$, we further need to require that $\rho < 4\mu^2/L(L-\mu)^2$.
	It is also worth noting that the following inequality holds for all $\rho > 0$
	\begin{equation*}
		\frac{2}{\mu + L} - \frac{\mu + L}{2\mu L (\mu/\rho L^2 + 1)} < \frac{2}{(1 +\mu\rho)(\mu + L)},
	\end{equation*}
	implying that the contractivity of \eqref{eq: sam same example} can be guaranteed.
\end{remark}
On the other hand, with probability $1/n$, the examples selected by SAM, say $z$ and $z^\prime$, are different in both $S$ and $S^\prime$.
In this case, we can simply bound the growth in $ \|\boldsymbol{v}_t - \boldsymbol{w}_t \|_2$ by the norms of $\nabla f(\boldsymbol{w}, z)$ and $\nabla f(\boldsymbol{v}, z^\prime)$.
\begin{lemma}
	\label{lemma: sam different example}
	Assume the same settings as in Lemma \ref{lemma: sam same example}.
	For the $t$-th iteration, suppose that the examples selected by SAM are different in $S$ and $S^\prime$ and the update rules are denoted by $\boldsymbol{w}_{t+1} = \boldsymbol{w}_t - \eta \nabla f(\boldsymbol{w}_t^{asc}, z)$ and $\boldsymbol{v}_{t+1} = \boldsymbol{v}_t - \eta \nabla f(\boldsymbol{v}_t^{asc}, z^\prime)$, respectively. Consequently, we have
	\begin{equation*}
		\|\boldsymbol{v}_{t+1} - \boldsymbol{w}_{t+1} \|_2 \leq \left(1 -  \left(1+\mu\rho\right) \frac{\eta\mu L}{\mu+L}\right)  \|\boldsymbol{v}_{t} - \boldsymbol{w}_{t} \|_2 + 2\eta G.
	\end{equation*}
\end{lemma}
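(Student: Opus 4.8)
The plan is to reduce the different-example case to the contractive same-example estimate of Lemma \ref{lemma: sam same example} by inserting an auxiliary iterate and quarantining the example mismatch into a single additive term, in the spirit of the expansivity argument of \citet{hardt2016train}.

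First I would advance $\boldsymbol{v}_t$ by one \emph{full} SAM step taken on the example $z$ selected in $S$ (rather than on $z^\prime$), defining the auxiliary point
\begin{equation*}
	\boldsymbol{u}_{t+1} = \boldsymbol{v}_t - \eta \nabla f\!\left(\boldsymbol{v}_t + \rho \nabla f(\boldsymbol{v}_t, z),\, z\right).
\end{equation*}
By construction, $\boldsymbol{u}_{t+1}$ and $\boldsymbol{w}_{t+1}$ are each obtained from $\boldsymbol{v}_t$ and $\boldsymbol{w}_t$ by a single SAM update on the \emph{same} example $z$, with identical $\eta$ and $\rho$. Hence Lemma \ref{lemma: sam same example} applies verbatim and yields
\begin{equation*}
	\|\boldsymbol{u}_{t+1} - \boldsymbol{w}_{t+1} \|_2 \leq \left(1 - \left(1+\mu\rho\right)\frac{\eta\mu L}{\mu+L}\right)\|\boldsymbol{v}_{t} - \boldsymbol{w}_{t} \|_2.
\end{equation*}

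Next I would split the target quantity via the triangle inequality, $\|\boldsymbol{v}_{t+1} - \boldsymbol{w}_{t+1}\|_2 \leq \|\boldsymbol{v}_{t+1} - \boldsymbol{u}_{t+1}\|_2 + \|\boldsymbol{u}_{t+1} - \boldsymbol{w}_{t+1}\|_2$, and control the first term. Since $\boldsymbol{v}_{t+1}$ and $\boldsymbol{u}_{t+1}$ share the same base point $\boldsymbol{v}_t$, their difference equals $\eta\bigl(\nabla f(\boldsymbol{v}_t + \rho\nabla f(\boldsymbol{v}_t, z),\, z) - \nabla f(\boldsymbol{v}_t^{asc}, z^\prime)\bigr)$, a difference of two ascent gradients evaluated on the distinct examples $z$ and $z^\prime$. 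The $G$-Lipschitz assumption bounds the norm of each per-example gradient by $G$ irrespective of the point at which it is evaluated, so $\|\boldsymbol{v}_{t+1} - \boldsymbol{u}_{t+1}\|_2 \leq 2\eta G$. Combining the two displays gives the claimed inequality.

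The main obstacle is conceptual rather than computational: one must choose the auxiliary iterate so that Lemma \ref{lemma: sam same example} is \emph{literally} applicable, i.e. so that the pair $(\boldsymbol{u}_{t+1}, \boldsymbol{w}_{t+1})$ arises from the identical example and hyperparameters, while the entire discrepancy induced by using $z^\prime$ instead of $z$ is confined to the single term $\|\boldsymbol{v}_{t+1} - \boldsymbol{u}_{t+1}\|_2$. The resulting $2\eta G$ penalty is admittedly loose—it triangulates through two separate gradients each bounded by $G$ rather than exploiting any cancellation—but this matches the standard stability analysis and is exactly the additive term needed to drive the subsequent recursion over the $T$ steps (where the different-example event occurs with probability $1/n$).
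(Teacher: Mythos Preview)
Your argument is correct and follows the same template as the paper's proof: insert an auxiliary iterate, apply the triangle inequality, invoke Lemma~\ref{lemma: sam same example} for the contractive part, and bound the mismatch by $2\eta G$ via $G$-Lipschitzness.

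The one difference worth noting is the choice of auxiliary point. The paper adds and subtracts $\eta\nabla f(\boldsymbol{w}_t^{asc}, z^\prime)$, i.e.\ it swaps only the \emph{descent} example on the $\boldsymbol{w}$-side while leaving $\boldsymbol{w}_t^{asc}$ computed with $z$; it then cites Lemma~\ref{lemma: sam same example} for the resulting pair, even though the two ascent points are built from different examples, so the citation is not literally the lemma's setting. Your auxiliary $\boldsymbol{u}_{t+1}$ instead replays the \emph{entire} SAM step (ascent and descent) on $z$ starting from $\boldsymbol{v}_t$, so the pair $(\boldsymbol{u}_{t+1},\boldsymbol{w}_{t+1})$ matches Lemma~\ref{lemma: sam same example} exactly. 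The price you pay is that your residual $\|\boldsymbol{v}_{t+1}-\boldsymbol{u}_{t+1}\|_2$ compares gradients at two different ascent points rather than at a common point, but since you only use the crude bound $\|\nabla f\|\le G$ on each term this costs nothing and the final $2\eta G$ is identical.
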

\begin{proof}
	The proof is straightforward. It follows immediately
	\begin{equation*}
		\begin{aligned}
			\|\boldsymbol{v}_{t+1} - \boldsymbol{w}_{t+1} \|_2  & =  \|\boldsymbol{v}_t - \eta\nabla f(\boldsymbol{v}_t^{asc}, z^\prime) - \left(\boldsymbol{w}_t - \eta\nabla f(\boldsymbol{w}_t^{asc}, z^\prime)\right) - \eta\left(\nabla f(\boldsymbol{w}_t^{asc}, z^\prime) - \nabla f(\boldsymbol{w}_t^{asc}, z)\right) \|_2  \\
			& \leq  \|\boldsymbol{v}_t - \eta\nabla f(\boldsymbol{v}_t^{asc}, z^\prime) - \left(\boldsymbol{w}_t - \eta\nabla f(\boldsymbol{w}_t^{asc}, z^\prime)\right)  \|_2 + \eta  \|\nabla f(\boldsymbol{w}_t^{asc}, z^\prime) - \nabla f(\boldsymbol{w}_t^{asc}, z) \|_2 \\
			&\leq \left(1 -  \left(1+\mu\rho\right) \frac{\eta\mu L}{\mu+L}\right)  \|\boldsymbol{v}_{t} - \boldsymbol{w}_{t} \|_2 + 2\eta G,
		\end{aligned}
	\end{equation*}
	where the last inequality comes from Lemma \ref{lemma: sam same example}.
\end{proof}
With the above two lemmas, we are now ready to give an upper bound over the expected generalization error of SAM.
\begin{theorem}
	\label{theorem: sam stability}
	Assume that the per-example loss function $f(\boldsymbol{w}, z)$ is $\mu$-strongly convex, $L$-smooth, and $G$-Lipschitz continuous with respect to the first argument $\boldsymbol{w}$.
	Suppose we run the SAM iteration with a constant learning rate $\eta$ satisfying \eqref{eq: learning rate constraint} for $T$ steps.
	Then,  SAM satisfies uniform stability with
	\begin{equation*}
		\varepsilon_{\mathrm{gen}}^{\mathrm{sam}} \leq \frac{2G^2(\mu + L)}{n \mu L(1+\mu\rho)}\left\{1 -\left[1- \left(1+\mu\rho\right)\frac{\eta \mu L}{\mu + L}\right]^T\right\}.
	\end{equation*}
\end{theorem}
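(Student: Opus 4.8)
The plan is to bound the uniform-stability parameter by controlling the expected divergence of the two trajectories, then convert it into a loss gap via the $G$-Lipschitz inequality \eqref{eq: stability lipschitz inequality} and invoke the uniform-stability theorem recalled above. First I would set the shorthand $c := \left(1+\mu\rho\right)\frac{\eta\mu L}{\mu+L}$, so that both Lemma~\ref{lemma: sam same example} and Lemma~\ref{lemma: sam different example} exhibit the same per-step contraction factor $(1-c)$. The learning-rate constraint \eqref{eq: learning rate constraint} guarantees $0 \le 1-c < 1$: as the remark records, it forces $\eta < \frac{2}{(1+\mu\rho)(\mu+L)} \le \frac{\mu+L}{(1+\mu\rho)\mu L}$, where the last inequality is equivalent to $(\mu-L)^2 \ge 0$ and hence always holds. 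Keeping $1-c$ nonnegative is what will let me iterate the bound safely.

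The crux is a one-step recursion obtained by averaging over the example that the mini-batch selects. Writing $\delta_t := \mathbb{E}\,\|\boldsymbol{v}_t - \boldsymbol{w}_t\|_2$ and recalling that $S$ and $S^\prime$ differ in exactly one of the $n$ examples, at step $t$ the selected example coincides in the two samples with probability $1-1/n$ (so Lemma~\ref{lemma: sam same example} applies) and differs with probability $1/n$ (so Lemma~\ref{lemma: sam different example} applies, contributing the extra additive term $2\eta G$). Taking expectations over this selection and over the past randomness therefore yields
$$\delta_{t+1} \leq (1-c)\,\delta_t + \frac{2\eta G}{n}.$$
Since the two runs share the deterministic initialization $\boldsymbol{w}_0 = \boldsymbol{v}_0$, we have $\delta_0 = 0$, and because $1-c \ge 0$ the inequality is preserved under repeated substitution, giving a geometric sum
$$\delta_T \leq \frac{2\eta G}{n}\sum_{k=0}^{T-1}(1-c)^k = \frac{2\eta G}{n}\cdot\frac{1-(1-c)^T}{c}.$$

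Finally I would substitute $c = (1+\mu\rho)\frac{\eta\mu L}{\mu+L}$, which cancels the factor $\eta$ in the prefactor and turns $\frac{2\eta G}{n c}$ into $\frac{2G(\mu+L)}{n\mu L(1+\mu\rho)}$; combining $\delta_T$ with the Lipschitz bound $|f(\boldsymbol{v}_T,z)-f(\boldsymbol{w}_T,z)| \leq G\|\boldsymbol{v}_T - \boldsymbol{w}_T\|_2$ and the uniform-stability theorem then produces exactly the claimed bound on $\varepsilon_{\mathrm{gen}}^{\mathrm{sam}}$. The only genuinely delicate point is the probabilistic averaging that fuses the two lemmas into a single recursion: one must ensure that the randomness being averaged is the example selection, that $\delta_0=0$ from the shared initialization, and that $1-c \ge 0$ so iterating the expectation does not reverse the inequality. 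Everything after that is a routine geometric-series manipulation and a cancellation of $\eta$.
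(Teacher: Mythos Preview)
Your proposal is correct and follows essentially the same route as the paper: you set up the one-step recursion by averaging Lemmas~\ref{lemma: sam same example} and~\ref{lemma: sam different example} over the $1/n$ chance of hitting the differing example, unroll the resulting geometric series from $\delta_0=0$, and finish with the $G$-Lipschitz bound. Your explicit verification that $0\le 1-c<1$ under \eqref{eq: learning rate constraint} is a nice addition that the paper leaves implicit.
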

\begin{proof}
	Define $\delta_t=  \|\boldsymbol{w}_t - \boldsymbol{v}_t \|_2$ to denote the Euclidean distance between $\boldsymbol{w}_t$ and $\boldsymbol{v}_t$ as training progresses.
	Observe that at any step $t\leq T$, with a probability $1-1/n$, the selected examples from $S$ and $S^\prime$ are the same.
	In contrast, with a probability of $1/n$, the selected examples are different. This is because $S$ and $S^\prime$ only differ by one example. Therefore, from Lemmas \ref{lemma: sam same example} and \ref{lemma: sam different example}, we conclude that
	\begin{equation*}
		\begin{aligned}
			\mathbb{E}[\delta_t] &\leq \left(1-\frac{1}{n}\right)\left(1 -  \left(1+\mu\rho\right) \frac{\eta\mu L}{\mu+L}\right)\mathbb{E}[\delta_{t-1}] + \frac{1}{n}\left(1 -  \left(1+\mu\rho\right) \frac{\eta\mu L}{\mu+L}\right)\mathbb{E}[\delta_{t-1}] + \frac{2\eta G}{n} \\
			&= \left(1 -  \left(1+\mu\rho\right) \frac{\eta\mu L}{\mu+L}\right)\mathbb{E}[\delta_{t-1}] + \frac{2\eta G}{n}.
		\end{aligned}
	\end{equation*}
	Unraveling the above recursion yields
	\begin{equation*}
		\begin{aligned}
			\mathbb{E}[\delta_T] \leq \frac{2\eta G}{n}\sum_{t=0}^{T-1}\left(1 -  \left(1+\mu\rho\right)\frac{\eta \mu L}{\mu + L}\right)^t = \frac{2G(\mu + L)}{n \mu L(1+\mu\rho)}\left\{1 -\left[1- \left(1+\mu\rho\right)\frac{\eta \mu L}{\mu + L}\right]^T\right\}.
		\end{aligned}
	\end{equation*}
	Plugging this inequality into (\ref{eq: stability lipschitz inequality}), we complete the proof.
\end{proof}
In the same strongly convex setting, it is known that SGD allows for a larger learning rate (namely, $\eta \leq \frac{2}{\mu + L}$) to attain a similar generalization bound \citep[Lemma 3.7]{hardt2016train}.
However, when both SGD and SAM use a constant learning rate satisfying \eqref{eq: learning rate constraint}, the following corollary suggests that SAM consistently generalizes better than SGD.
\begin{corollary}
	Assume the same settings as in Theorem \ref{theorem: sam stability}.
	Suppose we run SGD and SAM with a constant learning rate $\eta$ satisfying \eqref{eq: learning rate constraint} for $T$ steps.
	Then, SAM consistently achieves a tighter generalization bound than SGD.
\end{corollary}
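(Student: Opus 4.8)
The plan is to compare the two generalization bounds directly, since both are instances of the same expression evaluated at different perturbation radii. Setting $\rho=0$ in Theorem~\ref{theorem: sam stability} (equivalently, invoking \citet[Lemma 3.7]{hardt2016train}) recovers the SGD bound
$$\varepsilon_{\mathrm{gen}}^{\mathrm{sgd}} \leq \frac{2G^2(\mu+L)}{n\mu L}\left\{1-\left[1-\frac{\eta\mu L}{\mu+L}\right]^T\right\},$$
whereas the SAM bound carries an extra factor $1/(1+\mu\rho)$ out front and the factor $(1+\mu\rho)$ inside the bracketed power. Writing $a=\frac{\eta\mu L}{\mu+L}$ and $c=1+\mu\rho$, both bounds share the common prefactor $\frac{2G^2(\mu+L)}{n\mu L}$, so it suffices to show that the function
$$g(c)=\frac{1-(1-ca)^T}{c}$$
satisfies $g(c)\le g(1)$ for every $c>1$ in the admissible range, because $g(1)$ reproduces the SGD bracket and $g(1+\mu\rho)$ reproduces the SAM bracket.

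First I would record that the learning-rate constraint \eqref{eq: learning rate constraint}, together with the strict inequality noted in the Remark, guarantees $0<ca<1$: indeed $ca=c\,\frac{\eta\mu L}{\mu+L}<\frac{2\mu L}{(\mu+L)^2}\le\frac12$, so every base appearing in the powers above lies strictly between $0$ and $1$. This legitimizes differentiating $g$ and keeps all exponentiated quantities well-behaved. I would then show $g$ is non-increasing on this range by computing
$$g'(c)=\frac{Tac(1-ca)^{T-1}+(1-ca)^T-1}{c^2},$$
so that the claim reduces to verifying that the numerator is nonpositive.

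Substituting $u=ca\in(0,1)$, the numerator becomes $(1-u)^{T-1}\bigl[1+(T-1)u\bigr]-1$, and the core inequality to establish is
$$(1-u)^{T-1}\bigl[1+(T-1)u\bigr]\le 1,\qquad u\in(0,1),\ T\ge 1.$$
This scalar inequality is the main step, and the only genuinely nontrivial one. I expect to dispatch it with Bernoulli's inequality $1+(T-1)u\le(1+u)^{T-1}$, followed by $(1-u)^{T-1}(1+u)^{T-1}=(1-u^2)^{T-1}\le 1$; the case $T=1$ gives equality. This yields $g'(c)\le 0$, hence $g(1+\mu\rho)\le g(1)$, with strict inequality whenever $\rho>0$ and $T\ge 2$, so the SAM bound is strictly tighter. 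The hardest part is really just recognizing the right monotonicity reformulation $g(c)\le g(1)$ and controlling $1+(T-1)u$ by the Bernoulli bound; once the problem is reduced in this way the remaining algebra is routine.
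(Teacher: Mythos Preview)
Your proof is correct. Both you and the paper reduce the comparison to the scalar inequality
\[
\frac{1-(1-ca)^T}{c}\le 1-(1-a)^T,\qquad a=\frac{\eta\mu L}{\mu+L},\ c=1+\mu\rho,
\]
but you establish it by differentiating in the perturbation parameter $c$, whereas the paper differentiates in the learning-rate parameter. Concretely, the paper sets $q(x)=c(1-x)^T-(1-cx)^T$ and notes that $q'(x)=cT\bigl[(1-cx)^{T-1}-(1-x)^{T-1}\bigr]\le 0$ directly because $c>1$ forces $1-cx<1-x$; then $q(x)\le q(0)=c-1$ is exactly the desired inequality. Your route via $g'(c)\le 0$ is conceptually natural---it interpolates between SGD ($c=1$) and SAM ($c>1$) along the perturbation radius---but it costs you the extra Bernoulli step $(1-u)^{T-1}[1+(T-1)u]\le(1-u^2)^{T-1}\le1$, which the paper avoids entirely. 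In exchange you gain the explicit strictness statement for $\rho>0$, $T\ge 2$, which the paper does not spell out.
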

\begin{proof}
	Following \citet[Theorem 3.9]{hardt2016train}, we can derive a similar generalization bound for SGD as follows
	\begin{equation*}
		\varepsilon_{\mathrm{gen}}^{\mathrm{sgd}} \leq \frac{2G^2(\mu + L)}{n \mu L}\left\{1 -\left[1- \frac{\eta \mu L}{\mu + L}\right]^T\right\}.
	\end{equation*}
	Define $q(x) = a(1-x)^T - (1-ax)^T$, where $a = 1 + \mu\rho$ and $x=\frac{\eta \mu L}{\mu + L}$.
	Note that $a>1$ and $0<ax < 1$. With a simple calculation, we have
	\begin{equation*}
		q^\prime(x) = aT\left[\left(1-ax\right)^{T-1} - \left(1 -x \right)^{T-1}\right],
	\end{equation*}
	implying that $q^\prime(x) \leq 0$ for any $T\geq 1$ and as a result we have $q(x) \leq a - 1$.
	Then, it follows that
	\begin{equation*}
		\varepsilon_{\mathrm{gen}}^{\mathrm{sam}} = \frac{2G^2(\mu + L)}{n \mu L(1+\mu\rho)}\left\{1 -\left[1- \left(1+\mu\rho\right)\frac{\eta \mu L}{\mu + L}\right]^T\right\} \leq \frac{2G^2(\mu + L)}{n \mu L}\left\{1 -\left[1- \frac{\eta \mu L}{\mu + L}\right]^T\right\} = \varepsilon_{\mathrm{gen}}^{\mathrm{sgd}},
	\end{equation*}
	thus concluding the proof.
\end{proof}
\subsubsection{Convergence}
From the perspective of convergence, we can further prove that SAM converges to a noisy ball if the learning rate $\eta$ is fixed.
Let $z_t$ be the example that is chosen by SAM at $t$-th step and $\nabla f(\boldsymbol{w}_t^{asc}) = \nabla f\left(\boldsymbol{w}_t + \rho \nabla f\left(\boldsymbol{w}_t, z_t\right), z_t\right)$ be the stochastic gradient of the descent step.
It is worth noting that the same example $z_t$ is used in the ascent and descent steps.
The following lemma shows that $\nabla f(\boldsymbol{w}_t^{asc})$ may not be well-aligned with the full-batch gradient $\nabla F_S(\boldsymbol{w}_t)$.
\begin{lemma}
	\label{lemma: sam descent alignment}
	Assume the loss function $f(\boldsymbol{w}, z)$ is $\mu$-strongly convex, $L$-smooth, and $G$-Lipschitz continuous with respect to the first argument $\boldsymbol{w}$.
	Then, we have for all $\boldsymbol{w}_t\in\mathbb{R}^d$, 
	\begin{equation*}
		\mathbb{E} \left<\nabla f(\boldsymbol{w}_t^{asc}), \nabla F_S(\boldsymbol{w}_t)\right> \geq \rho(\mu + L)\left\|\nabla F_S(\boldsymbol{w}_t)\right\|_2^2 - \frac{\rho^2L^2G^2}{2}.
	\end{equation*}
\end{lemma}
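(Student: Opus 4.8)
The plan is to exploit the unbiasedness of the stochastic gradient together with the strong-convexity/smoothness toolkit used earlier in the paper. Write $h := \nabla F_S(\boldsymbol{w}_t)$, $g := \nabla f(\boldsymbol{w}_t, z_t)$ and $g^{asc} := \nabla f(\boldsymbol{w}_t + \rho g, z_t)$, so that the ascent gradient is evaluated at the point perturbed along the \emph{stochastic} gradient $g$. Since $z_t$ is drawn uniformly from $S$, we have $\mathbb{E}[g] = h$, and I would begin from the exact splitting $\langle g^{asc}, h\rangle = \langle g, h\rangle + \langle g^{asc} - g, h\rangle$. Taking the expectation over $z_t$, the first term collapses to $\mathbb{E}\langle g, h\rangle = \langle \mathbb{E}[g], h\rangle = \|h\|_2^2$, which already supplies a clean multiple of $\|\nabla F_S(\boldsymbol{w}_t)\|_2^2$. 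Everything then reduces to lower bounding the \emph{misalignment} term $\mathbb{E}\langle g^{asc} - g, h\rangle$.

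For that term I would insert the intermediate point $\boldsymbol{w}_t + \rho h$ perturbed along the full-batch direction, writing $g^{asc} - g = \bigl[\nabla f(\boldsymbol{w}_t+\rho g, z_t) - \nabla f(\boldsymbol{w}_t + \rho h, z_t)\bigr] + \bigl[\nabla f(\boldsymbol{w}_t+\rho h, z_t) - \nabla f(\boldsymbol{w}_t, z_t)\bigr]$. The second bracket has displacement $\rho h$, so after averaging it becomes $\langle \nabla F_S(\boldsymbol{w}_t + \rho h) - \nabla F_S(\boldsymbol{w}_t), h\rangle$, which strong convexity (or the coercivity inequality already invoked in Lemma~\ref{lemma: sam same example}) lower bounds by a positive multiple of $\rho\|h\|_2^2$; this is where the $\rho(\mu+L)$ factor is meant to come from. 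The first bracket has displacement $\rho(g-h)$, so $L$-smoothness gives a norm at most $\rho L\|g - h\|_2$; I would bound its inner product with $h$ by Cauchy--Schwarz, use $G$-Lipschitzness (so that $\|g\|_2, \|h\|_2 \le G$ and the stochastic-gradient variance is controlled), and finally apply Young's inequality to split the resulting cross term into a small multiple of $\|h\|_2^2$ and the additive slack $\tfrac{\rho^2 L^2 G^2}{2}$. Collecting the positive curvature contribution and this slack yields the claimed inequality.

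The main obstacle is precisely the mismatch that motivates the lemma: the ascent step perturbs along the per-example stochastic gradient $g$ rather than along $h = \nabla F_S(\boldsymbol{w}_t)$, so $g^{asc}$ is naturally correlated with $g$ but only loosely aligned with the full-batch gradient we are dotting against. The delicate point is therefore to convert the curvature gain --- which is most naturally expressed with respect to the realized displacement $\rho g$ --- into a guarantee against $h$, while absorbing the variance of the stochastic gradient into the controllable $\rho^2 L^2 G^2$ term rather than letting it degrade the leading coefficient. Keeping the Young's-inequality parameter tuned so that the $-\tfrac{\rho^2L^2G^2}{2}$ term matches exactly, and verifying that no hidden dependence on the learning rate or on $\|g-h\|_2$ survives, is the step I would scrutinize most carefully.
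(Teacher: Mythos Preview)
Your decomposition is exactly the paper's: insert the intermediate point $\widehat{\boldsymbol{w}}_t^{asc}=\boldsymbol{w}_t+\rho h$ perturbed along the full-batch gradient, handle $\langle\nabla f(\widehat{\boldsymbol{w}}_t^{asc}),h\rangle$ (equivalently your $\langle g,h\rangle$ plus the second bracket) by unbiasedness and strong convexity, and control the ``wrong-direction'' bracket $\nabla f(\boldsymbol{w}_t+\rho g)-\nabla f(\boldsymbol{w}_t+\rho h)$ via $L$-smoothness followed by Young's inequality and the $G$-Lipschitz variance bound. So the architecture is right.

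One accounting point will trip you up if you follow your own narrative literally. Strong convexity (or the coercivity lemma) applied to the second bracket gives only $\mu\rho\|h\|_2^2$, \emph{not} $\rho(\mu+L)\|h\|_2^2$; the extra $\rho L$ does not come from curvature at all. After Young's inequality the error bracket contributes $-\tfrac{1}{2}\|h\|_2^2-\tfrac{\rho^2L^2}{2}\mathbb{E}\|g-h\|_2^2$, and you must use the sharp variance estimate $\mathbb{E}\|g-h\|_2^2=\mathbb{E}\|g\|_2^2-\|h\|_2^2\le G^2-\|h\|_2^2$ (not merely $\le G^2$) so that a positive $\tfrac{\rho^2L^2}{2}\|h\|_2^2$ survives. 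Collecting terms then leaves the coefficient $\tfrac{1+\rho^2L^2}{2}+\mu\rho$ in front of $\|h\|_2^2$, and the final step is the AM--GM bound $\tfrac{1+\rho^2L^2}{2}\ge\rho L$, which is where $\rho(\mu+L)$ actually appears. If you skip the sharp variance bound you are left with $\tfrac{1}{2}+\mu\rho$, which only dominates $\rho(\mu+L)$ when $\rho\le\tfrac{1}{2L}$, so that shortcut does not prove the stated lemma.
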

\begin{proof}
	First, it is easy to check that $F_S(\boldsymbol{w})$ is $\mu$-strongly convex, $L$-smooth, and $G$-Lipschitz continuous with respect to the first argument $\boldsymbol{w}$ as well.
	Let $\widehat{\boldsymbol{w}}_t^{asc} = \boldsymbol{w}_t + \rho \nabla F_S(\boldsymbol{w}_t)$, we have
	\begin{equation*}
		\begin{aligned}
			\left<\nabla f(\boldsymbol{w}_t^{asc}) - \nabla f(\widehat{\boldsymbol{w}}_t^{asc}), \nabla F_S(\boldsymbol{w}_t)\right> 
			&\leq \frac{1}{2}\left\|\nabla f(\boldsymbol{w}_t^{asc}) - \nabla f(\widehat{\boldsymbol{w}}_t^{asc})\right\|_2^2 + \frac{1}{2}\left\|\nabla F_S(\boldsymbol{w}_t)\right\|_2^2 \\
			& \leq \frac{\rho^2L^2}{2}\left\|\nabla f(\boldsymbol{w}_t) - \nabla F_S(\boldsymbol{w}_t)\right\|_2^2 + \frac{1}{2}\left\|\nabla F_S(\boldsymbol{w}_t)\right\|_2^2.
		\end{aligned}
	\end{equation*}
	After taking the expectation, it follows that
	\begin{equation*}
		\mathbb{E} \left<\nabla f(\boldsymbol{w}_t^{asc}) - \nabla f(\widehat{\boldsymbol{w}}_t^{asc}), \nabla F_S(\boldsymbol{w}_t)\right> \leq \frac{\rho^2L^2G^2}{2} + \frac{1-\rho^2L^2}{2}\left\|\nabla F_S(\boldsymbol{w}_t)\right\|_2^2.
	\end{equation*}
	On the other hand, 
	\begin{equation*}
		\begin{aligned}
			\mathbb{E}\left<\nabla f(\widehat{\boldsymbol{w}}_t^{asc}), \nabla F_S(\boldsymbol{w}_t)\right>
			& = \left<\nabla F_S(\widehat{\boldsymbol{w}}_t^{asc}), \nabla F_S(\boldsymbol{w}_t)\right> \\
			& = \left<\nabla F_S(\widehat{\boldsymbol{w}}_t^{asc}) - \nabla F_S(\boldsymbol{w}_t), \nabla F_S(\boldsymbol{w}_t)\right> + \left\|\nabla F_S(\boldsymbol{w}_t)\right\|_2^2 \\
			& = \frac{1}{\rho}\left<\nabla F_S(\boldsymbol{w}+\rho\nabla F_S(\boldsymbol{w}_t)) - \nabla F_S(\boldsymbol{w}_t), \rho\nabla F_S(\boldsymbol{w}_t)\right> + \left\|\nabla F_S(\boldsymbol{w}_t)\right\|_2^2 \\
			& \geq \left(1 + \mu\rho\right)\left\|\nabla F_S(\boldsymbol{w}_t)\right\|_2^2.
		\end{aligned}
	\end{equation*}
	Combining the above results, we have
	\begin{equation*}
		\begin{aligned}
			\mathbb{E}\left<\nabla f(\boldsymbol{w}_t^{asc}), \nabla F_S(\boldsymbol{w}_t)\right> 
			&= \mathbb{E}\left<\nabla f(\boldsymbol{w}_t^{asc}) - \nabla f(\widehat{\boldsymbol{w}}_t^{asc}), \nabla F_S(\boldsymbol{w}_t)\right> + \mathbb{E}\left<\nabla f(\widehat{\boldsymbol{w}}_t^{asc}), \nabla F_S(\boldsymbol{w}_t)\right>\\
			&\geq \left(\frac{1+\rho^2L^2}{2} + \mu\rho \right)\left\|\nabla F_S(\boldsymbol{w}_t)\right\|_2^2 - \frac{\rho^2L^2G^2}{2} \\
			& \geq \rho(\mu + L)\left\|\nabla F_S(\boldsymbol{w}_t)\right\|_2^2 - \frac{\rho^2L^2G^2}{2},
		\end{aligned}
	\end{equation*}
	completing the proof.
\end{proof}
\begin{theorem}
	\label{theorem: sam convergence}
	Assume that the per-example loss function $f(\boldsymbol{w}, z)$ is $\mu$-strongly convex, $L$-smooth and $G$-Lipschitz continuous with respect to the first argument $\boldsymbol{w}$.
	Consider the sequence $\boldsymbol{w}_0, \boldsymbol{w}_1, \cdots, \boldsymbol{w}_T$ generated by running SAM with a constant learning rate $\eta$ for $T$ steps. 
	Let $\boldsymbol{w}^* \in \arg \inf_{\boldsymbol{w}} F_S(\boldsymbol{w})$, it follows that
	\begin{equation*}
		\begin{aligned}
			\varepsilon_{\mathrm{opt}}^{\mathrm{sam}} = \mathbb{E} \left[F_S(\boldsymbol{w}_T)-F_S(\boldsymbol{w}^{*})\right] 
			\leq \left[1 - 2\eta\mu\rho(\mu + L)\right]^T \mathbb{E}\left[F_S(\boldsymbol{w}_0) - F_S(\boldsymbol{w}^{*})\right] + \frac{LG^2\left(\rho^2L + \eta\right)}{4\mu\rho\left(\mu + L\right)}.
		\end{aligned}
	\end{equation*}
\end{theorem}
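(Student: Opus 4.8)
The plan is to run a standard descent-lemma argument for a smooth objective, but with the SAM descent direction $\nabla f(\boldsymbol{w}_t^{asc})$ in place of an ordinary stochastic gradient, and to control the resulting cross term with the alignment estimate already established in Lemma~\ref{lemma: sam descent alignment}. First I would record that $F_S$, being an average of per-example losses, inherits $\mu$-strong convexity, $L$-smoothness, and $G$-Lipschitzness. Applying the $L$-smoothness descent inequality to the update $\boldsymbol{w}_{t+1} = \boldsymbol{w}_t - \eta\nabla f(\boldsymbol{w}_t^{asc})$ gives
\[
F_S(\boldsymbol{w}_{t+1}) \le F_S(\boldsymbol{w}_t) - \eta\left<\nabla F_S(\boldsymbol{w}_t), \nabla f(\boldsymbol{w}_t^{asc})\right> + \tfrac{L\eta^2}{2}\|\nabla f(\boldsymbol{w}_t^{asc})\|_2^2 .
\]

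Next I would take the expectation over the random draw $z_t$ conditioned on $\boldsymbol{w}_t$ and treat the two variable terms separately. For the inner-product term, Lemma~\ref{lemma: sam descent alignment} supplies the lower bound $\mathbb{E}\left<\nabla f(\boldsymbol{w}_t^{asc}),\nabla F_S(\boldsymbol{w}_t)\right> \ge \rho(\mu+L)\|\nabla F_S(\boldsymbol{w}_t)\|_2^2 - \rho^2 L^2 G^2/2$, which after multiplication by $-\eta$ contributes a negative multiple of $\|\nabla F_S(\boldsymbol{w}_t)\|_2^2$ together with the constant $\eta\rho^2 L^2 G^2/2$. For the quadratic term, $G$-Lipschitzness of $f$ yields $\|\nabla f(\boldsymbol{w}_t^{asc})\|_2 \le G$ pointwise, so its expectation is at most $G^2$, contributing $L\eta^2 G^2/2$. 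Collecting the two constants into $\tfrac{\eta L G^2}{2}(\rho^2 L + \eta)$ leaves
\[
\mathbb{E}[F_S(\boldsymbol{w}_{t+1})] \le F_S(\boldsymbol{w}_t) - \eta\rho(\mu+L)\|\nabla F_S(\boldsymbol{w}_t)\|_2^2 + \tfrac{\eta L G^2}{2}(\rho^2 L + \eta).
\]

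To convert the gradient-norm term into a contraction on the suboptimality gap, I would invoke the Polyak--\L{}ojasiewicz inequality implied by $\mu$-strong convexity, namely $\|\nabla F_S(\boldsymbol{w}_t)\|_2^2 \ge 2\mu\bigl(F_S(\boldsymbol{w}_t) - F_S(\boldsymbol{w}^*)\bigr)$. Writing $a_t = \mathbb{E}[F_S(\boldsymbol{w}_t) - F_S(\boldsymbol{w}^*)]$ and taking total expectation via the tower rule produces the one-step linear recursion $a_{t+1} \le \bigl(1 - 2\eta\mu\rho(\mu+L)\bigr)\,a_t + \tfrac{\eta L G^2}{2}(\rho^2 L + \eta)$. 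Unrolling this recursion and summing the geometric series with $\sum_{t=0}^{T-1} c^t \le 1/(1-c)$ for $c = 1 - 2\eta\mu\rho(\mu+L)$, so that $1-c = 2\eta\mu\rho(\mu+L)$, gives precisely the claimed geometric-decay factor times $a_0$ plus the residual $\tfrac{LG^2(\rho^2 L + \eta)}{4\mu\rho(\mu+L)}$.

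The main obstacle I anticipate is not algebraic but concerns the admissibility of the contraction factor: the geometric bound $\sum c^t \le 1/(1-c)$ requires $0 \le c < 1$, i.e. $2\eta\mu\rho(\mu+L) \le 1$, so I would have to check (or assume, consistent with the learning-rate constraint~\eqref{eq: learning rate constraint} used in the stability analysis) that $\eta$ is small enough for $1 - 2\eta\mu\rho(\mu+L)$ to lie in $[0,1)$, lest the bound become vacuous. A secondary care point is that Lemma~\ref{lemma: sam descent alignment} is an in-expectation statement over the sampled $z_t$, so the conditioning and tower-property bookkeeping must be carried out cleanly before the recursion is unrolled, and the per-step constant must be kept free of the iterate index so that the geometric sum decouples.
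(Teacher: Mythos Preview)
Your proposal is correct and follows essentially the same route as the paper: apply the $L$-smoothness descent inequality to the SAM step, bound the inner product via Lemma~\ref{lemma: sam descent alignment} and the quadratic term via $G$-Lipschitzness, convert the gradient norm to the suboptimality gap through the Polyak--\L{}ojasiewicz inequality, and unroll the resulting linear recursion. Your observation that the contraction factor requires $2\eta\mu\rho(\mu+L)\le 1$ is a fair point that the paper leaves implicit.
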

\begin{proof}
	From Taylor's theorem, there exists a $\widehat{\boldsymbol{w}}_t$ such that
	\begin{equation*}
		\begin{aligned}
			F_S(\boldsymbol{w}_{t+1}) & = F_S\left(\boldsymbol{w}_t - \eta \nabla f(\boldsymbol{w}_{t}^{asc})\right) \\
			& = F_S(\boldsymbol{w}_t) - \eta \left<\nabla f(\boldsymbol{w}_{t}^{asc}), \nabla F_S(\boldsymbol{w}_{t}) \right> + \frac{\eta^2}{2}\nabla f(\boldsymbol{w}_{t}^{asc})^T \nabla^2 F_S(\widehat{\boldsymbol{w}}_t)\nabla f(\boldsymbol{w}_{t}^{asc}) \\
			& \leq F_S(\boldsymbol{w}_t) - \eta \left<\nabla f(\boldsymbol{w}_{t}^{asc}), \nabla F_S(\boldsymbol{w}_{t}) \right> + \frac{\eta^2 L}{2}  \|\nabla f(\boldsymbol{w}_{t}^{asc}) \|_2^2 \\
			& \leq F_S(\boldsymbol{w}_t) - \eta \left<\nabla f(\boldsymbol{w}_{t}^{asc}), \nabla F_S(\boldsymbol{w}_{t}) \right>+ \frac{\eta^2 LG^2}{2}.
		\end{aligned}
	\end{equation*}
	According to Lemma \ref{lemma: sam descent alignment}, it follows that
	\begin{equation*}
		\begin{aligned}
			\mathbb{E} \left<\nabla f(\boldsymbol{w}_t^{asc}), \nabla F_S(\boldsymbol{w}_t)\right> 
			& \geq \rho(\mu + L)\left\|\nabla F_S(\boldsymbol{w}_t)\right\|_2^2 - \frac{\rho^2L^2G^2}{2} \\
			&\geq 2\mu\rho(\mu + L)\left[F_S(\boldsymbol{w}_t) - F_S(\boldsymbol{w}^{*})\right] - \frac{\rho^2L^2G^2}{2},
		\end{aligned}
	\end{equation*}
	where the last inequality is due to Polyak-\L{}ojasiewicz condition as a result of being $\mu$-strongly convex.
	Subtracting $F_S(\boldsymbol{w}^{*})$ from both sides and taking expectations, we obtain
	\begin{equation*}
		\begin{aligned}
			\mathbb{E} \left[F_S(\boldsymbol{w}_{t+1})-F_S(\boldsymbol{w}^{*})\right] \leq \left[1 - 2\eta\mu\rho(\mu + L)\right] \mathbb{E}\left[F_S(\boldsymbol{w}_t) - F_S(\boldsymbol{w}^{*})\right] + \frac{\eta\rho^2L^2G^2}{2}+\frac{\eta^2G^2L}{2}.
		\end{aligned}
	\end{equation*}
	Recursively applying the above inequality and summing up the geometric series yields
	\begin{equation*}
		\begin{aligned}
			\mathbb{E} \left[F_S(\boldsymbol{w}_T)-F_S(\boldsymbol{w}^{*})\right] 
			\leq \left[1 - 2\eta\mu\rho(\mu + L)\right]^T \mathbb{E}\left[F_S(\boldsymbol{w}_0) - F_S(\boldsymbol{w}^{*})\right] + \frac{LG^2\left(\rho^2L + \eta\right)}{4\mu\rho\left(\mu + L\right)},
		\end{aligned}
	\end{equation*}
	thus concluding the proof.
\end{proof}
\begin{remark}
	Under a similar argument, we can establish that $\varepsilon_{\mathrm{opt}}^{\mathrm{sgd}}$ is bounded by $\frac{\eta L G^2}{4\mu}$ that vanishes when the learning rate $\eta$ becomes infinitesimally small.
	By contrast, the upper bound of $\varepsilon_{\mathrm{opt}}^{\mathrm{sam}}$ consists of a constant $\frac{\rho L^2 G^2}{4\mu(\mu+L)}$, implying that SAM will never converge to the minimum unless $\rho$ decays to zero as well.
	While we often use a fixed $\rho$ in practice to train neural networks, this observation highlights that $\rho$ should also be adjusted according to the learning rate to achieve a lower optimization error. 
	We note that while SAM consistently achieves a tighter upper bound over the generalization error than SGD, this theorem suggests that it does not necessarily perform better on unseen data because $\varepsilon_{\mathrm{opt}}^{\mathrm{sam}}$ is not always smaller than $\varepsilon_{\mathrm{opt}}^{\mathrm{sgd}}$. 
	Therefore, it requires particular attention in hyper-parameter tuning to promote the generalization performance.
	Moreover, if $\eta$ dominates over $\rho^2L$, this theorem suggests that the optimization error will decrease with $\rho$.
	On the contrary, if $\rho^2L \gg \eta$, the optimization error will increase with $\rho$.
\end{remark}
\indent
Combining the previous results, we are able to present an upper bound over the expected excess risk of the SAM algorithm.
\begin{theorem}
	\label{theorem: sam excess risk}
	Under assumptions and parameter settings in Theorems \ref{theorem: sam stability} and \ref{theorem: sam convergence}, the expected excess risk $\varepsilon_{\mathrm{exc}}^{\mathrm{sam}}$ of the output $\boldsymbol{w}_T$ obeys $\varepsilon_{\mathrm{exc}}^{\mathrm{sam}}\leq \varepsilon_{\mathrm{gen}}^{\mathrm{sam}} + \varepsilon_{\mathrm{opt}}^{\mathrm{sam}}$, where $\varepsilon_{\mathrm{gen}}^{\mathrm{sam}}$ and $\varepsilon_{\mathrm{opt}}^{\mathrm{sam}}$ are given by Theorems \ref{theorem: sam stability} and \ref{theorem: sam convergence}, respectively. Furthermore, as $T$ grows to infinity, we have
	\begin{equation*}
		\varepsilon_{\mathrm{exc}}^{\mathrm{sam}}\leq \frac{2G^2(\mu + L)}{n \mu L(1+\mu\rho)} + \frac{LG^2\left(\rho^2L + \eta\right)}{4\mu\rho\left(\mu + L\right)}.
	\end{equation*}
\end{theorem}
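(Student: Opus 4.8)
The plan is to recognize that this theorem is essentially a bookkeeping assembly of the two preceding results, so the proof should be short. The foundation is the excess risk decomposition introduced in the Notations and Preliminaries section, namely
\begin{equation*}
	\varepsilon_{exc} = \varepsilon_{gen} + \varepsilon_{opt} + \varepsilon_{approx},
\end{equation*}
together with the already-established fact that $\varepsilon_{approx} = \mathbb{E}\left[F_S(\boldsymbol{w}_S^*) - F_{\mathfrak{D}}(\boldsymbol{w}_{\mathfrak{D}}^*)\right] \leq 0$. First I would invoke this decomposition directly for the output $\boldsymbol{w}_T$ of SAM: since the approximation term is non-positive, discarding it yields $\varepsilon_{\mathrm{exc}}^{\mathrm{sam}} \leq \varepsilon_{\mathrm{gen}}^{\mathrm{sam}} + \varepsilon_{\mathrm{opt}}^{\mathrm{sam}}$, which is exactly the first claim. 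Here $\varepsilon_{\mathrm{gen}}^{\mathrm{sam}}$ is controlled by Theorem \ref{theorem: sam stability} (via the generalization error being bounded by the uniform stability parameter) and $\varepsilon_{\mathrm{opt}}^{\mathrm{sam}}$ by Theorem \ref{theorem: sam convergence}.

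Next, to obtain the asymptotic bound I would take $T \to \infty$ in each of the two summands separately. For the stability term, Theorem \ref{theorem: sam stability} gives a factor $\left[1 - (1+\mu\rho)\frac{\eta\mu L}{\mu+L}\right]^T$ multiplying the bracketed difference; as long as this base lies in $[0,1)$, the power tends to zero and $\varepsilon_{\mathrm{gen}}^{\mathrm{sam}}$ converges monotonically to its limiting value $\frac{2G^2(\mu+L)}{n\mu L(1+\mu\rho)}$. For the optimization term, Theorem \ref{theorem: sam convergence} supplies a transient term $\left[1 - 2\eta\mu\rho(\mu+L)\right]^T \mathbb{E}\left[F_S(\boldsymbol{w}_0) - F_S(\boldsymbol{w}^{*})\right]$, which similarly vanishes provided the contraction factor is in $[0,1)$, leaving the steady-state residual $\frac{LG^2(\rho^2 L + \eta)}{4\mu\rho(\mu+L)}$. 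Summing the two limits gives precisely the stated bound.

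The only genuine verification — and hence the main, albeit mild, obstacle — is confirming that both geometric bases indeed lie strictly inside $[0,1)$ so the limits are legitimate. For the stability factor, non-negativity and strict contractivity follow from the learning-rate constraint \eqref{eq: learning rate constraint} together with the inequality recorded in the Remark after Lemma \ref{lemma: sam same example}, which shows $(1+\mu\rho)\frac{\eta\mu L}{\mu+L} < 1$. For the optimization factor, I would note that the condition $1 - 2\eta\mu\rho(\mu+L) \in [0,1)$ requires $\eta \leq \frac{1}{2\mu\rho(\mu+L)}$; one should check this is compatible with \eqref{eq: learning rate constraint}, or otherwise state it as part of the standing parameter assumptions inherited from the two cited theorems. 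Once this compatibility is secured, both transient terms decay geometrically and the proof closes by addition.
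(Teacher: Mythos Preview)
Your proposal is correct and follows exactly the same approach as the paper: invoke the decomposition $\varepsilon_{exc} \leq \varepsilon_{gen} + \varepsilon_{opt}$ and let $T\to\infty$ in the bounds of Theorems \ref{theorem: sam stability} and \ref{theorem: sam convergence}. The paper's own proof is a single sentence (``This result is a direct consequence of $T\to\infty$''), so your additional care in checking that the contraction factors lie in $[0,1)$ is more than the authors provide.
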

\begin{proof}
	This result is a direct consequence of $T\to\infty$.
\end{proof}
\begin{figure}[t]
	\centering
	\begin{subfigure}[b]{0.49\textwidth}
		\centering
		\includegraphics[width=\textwidth, clip, trim= 0 0 0 0]{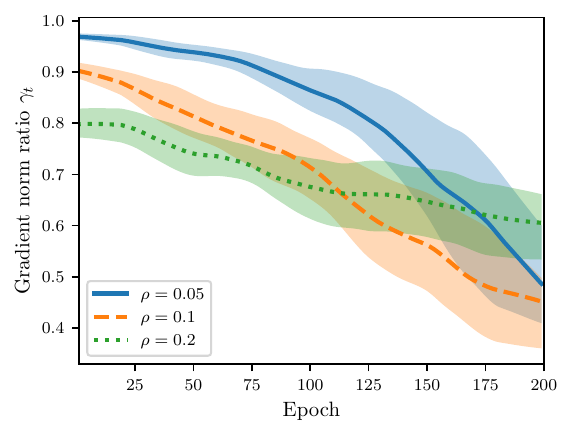}
		\caption{ResNet-20 on CIFAR-10}
	\end{subfigure}
	\begin{subfigure}[b]{0.49\textwidth}
		\centering
		\includegraphics[width=\textwidth, clip, trim= 0 0 0 0]{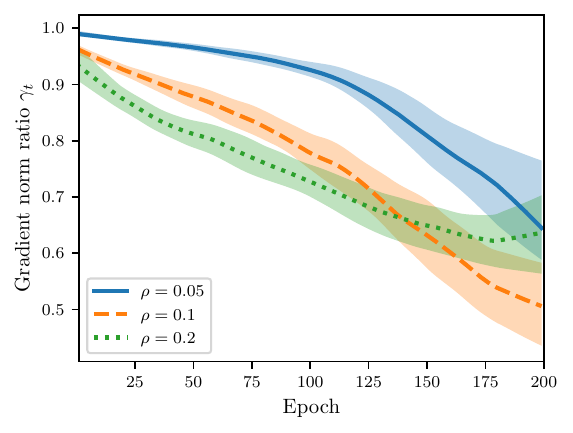}
		\caption{ResNet-56 on CIFAR-100}
	\end{subfigure}
	\caption{Evolution of the ratio $\gamma_t$ of the gradient norm of the ascent step $\|\nabla F_{\Omega_t}(\boldsymbol{w}_t)\|_2$ to that of the descent step $\|\nabla F_{\Omega_t}(\boldsymbol{w}^{asc}_t)\|_2$ throughout training.
		Both neural networks are trained up to 200 epochs using the SAM optimizer with different perturbation radius $\rho\in \{0.01, 0.05, 0.2\}$.
	}
	\label{fig: gradient norm ratio}
\end{figure}
\subsection{Expected Excess Risk Analysis of SSAM}
\label{subsection: Expected Excess Risk Analysis of StableSAM}
Now we continue to investigate the stability of sharpness-aware training when the renormalization strategy is applied.
Compared to SAM,  we demonstrate that SSAM allows for a relatively larger learning rate without performance deterioration.
\subsubsection{Stability}
For a fixed perturbation radius $\rho$, as shown in Figure \ref{fig: gradient norm ratio}, the renormalization factor $\gamma_t$ tends to decrease throughout training and is smaller than $1$.
Therefore, we can impose another assumption as follows.
\begin{assumption}
	\label{assumption: bounded renormalization ratio}
	Suppose that there exist a constant $\gamma_\mathrm{upp}$ so that $\gamma_t$ is bounded for all $1\leq t\leq T$
	\begin{equation*}
		0 < \gamma_t \leq \gamma_\mathrm{upp} < 1.
	\end{equation*}
\end{assumption}
Notice that the constant $\gamma_\mathrm{upp}$ is not universal but problem-specific.
Under this assumption, we can derive a similar growth rate of $\|\boldsymbol{v}_t - \boldsymbol{w}_t \|_2$ as Lemma \ref{lemma: sam same example}.
\begin{lemma}
	\label{lemma: tight stablesam same example}
	Let Assumption \ref{assumption: bounded renormalization ratio} hold and assume that the per-example loss function $f(\boldsymbol{w}, z)$ is $\mu$-strongly convex, $L$-smooth and $G$-Lipschitz continuous with respect to the first argument $\boldsymbol{w}$. Suppose that at step $t$, the examples selected by SSAM are the same in $S$ and $S^\prime$ and the corresponding update rules are denoted by
	$\boldsymbol{w}_{t+1} = \boldsymbol{w}_t - \eta \gamma_t\nabla f(\boldsymbol{w}_t^{asc}, z)$ and $\boldsymbol{v}_{t+1} = \boldsymbol{v}_t - \eta \gamma_t\nabla f(\boldsymbol{v}_t^{asc}, z)$, respectively.
	Then, it follows that for all $1\leq t\leq T$
	\begin{equation}
		\label{eq: ssam same example}
		\|\boldsymbol{v}_{t+1} - \boldsymbol{w}_{t+1} \|_2 \leq \left(1 -  \left(1+\mu\rho\right) \frac{\gamma_t\eta\mu L}{\mu+L}\right)  \|\boldsymbol{v}_{t} - \boldsymbol{w}_{t} \|_2,
	\end{equation}
	where the learning rate $\eta$ satisfies that
	\begin{equation}
		\label{eq: ssam learning rate constraint}
		\eta \leq \frac{1}{\gamma_\mathrm{upp}}\left[\frac{2}{\mu + L} - \frac{\mu + L}{2\mu L (\mu/\rho L^2 + 1)}\right].
	\end{equation}
\end{lemma}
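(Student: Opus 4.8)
The plan is to reduce this statement directly to Lemma~\ref{lemma: sam same example} by absorbing the renormalization factor into the learning rate. The key observation is that the renormalization in SSAM rescales only the descent gradient: the ascent step $\boldsymbol{w}_t^{asc} = \boldsymbol{w}_t + \rho\nabla f(\boldsymbol{w}_t, z)$ depends solely on $\rho$ and is left untouched, so the SSAM recursion $\boldsymbol{w}_{t+1} = \boldsymbol{w}_t - \eta\gamma_t\nabla f(\boldsymbol{w}_t^{asc}, z)$ coincides exactly with a single SAM step executed with effective learning rate $\tilde{\eta} := \eta\gamma_t$ and unchanged perturbation radius $\rho$. The same identification applies to the $\boldsymbol{v}$-trajectory with the same scalar $\gamma_t$ as stipulated by the lemma.

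First I would substitute $\eta \mapsto \tilde{\eta} = \eta\gamma_t$ throughout the proof of Lemma~\ref{lemma: sam same example}. Since $\rho$ is untouched, the opening lower bound on $\|\boldsymbol{v}_t^{asc} - \boldsymbol{w}_t^{asc}\|_2^2$ and the coercivity inequality \textcircled{1} carry over verbatim, and the full chain \textcircled{1}--\textcircled{3} holds with $\eta$ replaced by $\tilde{\eta}$ wherever it appears as the descent step-size. This yields the contraction factor $1 - (1+\mu\rho)\tilde{\eta}\mu L/(\mu+L) = 1 - (1+\mu\rho)\gamma_t\eta\mu L/(\mu+L)$, matching the right-hand side of \eqref{eq: ssam same example}. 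For feasibility, condition \eqref{eq: learning rate constraint} applied to $\tilde{\eta}$ demands $\eta\gamma_t \leq \frac{2}{\mu+L} - \frac{\mu+L}{2\mu L(\mu/\rho L^2 + 1)}$; invoking Assumption~\ref{assumption: bounded renormalization ratio}, we have $\gamma_t \leq \gamma_\mathrm{upp}$ for all $1\leq t\leq T$, so imposing \eqref{eq: ssam learning rate constraint} guarantees $\eta\gamma_t \leq \eta\gamma_\mathrm{upp} \leq \frac{2}{\mu+L} - \frac{\mu+L}{2\mu L(\mu/\rho L^2 + 1)}$ uniformly in $t$, which completes the reduction.

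The step requiring the most care is confirming that the substitution $\eta \mapsto \eta\gamma_t$ is genuinely legitimate rather than merely formal. One must verify that in the proof of Lemma~\ref{lemma: sam same example} the symbol $\eta$ enters only as the descent step-size and never implicitly through $\rho$, so that rescaling $\eta$ leaves the $\rho$-dependent quantities intact; in particular the term $\frac{\rho^2\eta}{\frac{2}{\mu+L}-\eta}$ produced by the completion-of-square step \textcircled{2} becomes $\frac{\rho^2\tilde{\eta}}{\frac{2}{\mu+L}-\tilde{\eta}}$, and one checks that its sign is still governed by the same condition, now read in $\tilde{\eta}$. A minor subtlety worth flagging is that the lemma uses a single common $\gamma_t$ for both trajectories; I would treat this as the stated modeling convention, so that $\gamma_t$ is a fixed deterministic scalar at step $t$ and the argument transfers without further modification.
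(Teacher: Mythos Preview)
Your reduction is correct and is precisely what the paper does: its proof explicitly repeats the chain \textcircled{1}--\textcircled{3} of Lemma~\ref{lemma: sam same example} with every occurrence of $\eta$ replaced by $\gamma_t\eta$, arriving at the same contraction factor and the same feasibility condition you derive via $\tilde{\eta}=\eta\gamma_t\leq\eta\gamma_{\mathrm{upp}}$. Your flag about the shared $\gamma_t$ across both trajectories is apt; the paper silently adopts this convention as well.
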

\begin{proof}
	The proof is similar to Lemma \ref{lemma: sam same example}.
	According to the update rule of SSAM, we have
	\begin{align*}
		& \|\boldsymbol{v}_{t+1} - \boldsymbol{w}_{t+1} \|_2^2 
		={}  \| \boldsymbol{v}_t - \gamma_t\eta\nabla f(\boldsymbol{v}_t^{asc}) -\left(\boldsymbol{w}_t - \gamma_t\eta\nabla f(\boldsymbol{w}_t^{asc})\right)  \|_2^2 \\
		&{}={}  \|\boldsymbol{v}_{t} - \boldsymbol{w}_{t} \|_2^2 - 2\gamma_t\eta \left<\boldsymbol{v}_{t} - \boldsymbol{w}_{t}, \nabla f(\boldsymbol{v}_t^{asc})-\nabla f(\boldsymbol{w}_t^{asc}) \right> + \gamma_t^2\eta^2 \|\nabla f(\boldsymbol{v}_t^{asc})-\nabla f(\boldsymbol{w}_t^{asc}) \|_2^2 \\
		& =\begin{aligned}[t]
			& \|\boldsymbol{v}_{t} - \boldsymbol{w}_{t} \|_2^2 - 2\gamma_t\eta \left<\boldsymbol{v}_{t}^{asc} - \boldsymbol{w}_{t}^{asc}, \nabla f(\boldsymbol{v}_t^{asc})-\nabla f(\boldsymbol{w}_t^{asc}) \right>\\ 
			&\quad+ 2\gamma_t\rho\eta \left<\nabla f(\boldsymbol{v}_{t}) - \nabla f(\boldsymbol{w}_{t}), \nabla f(\boldsymbol{v}_t^{asc})-\nabla f(\boldsymbol{w}_t^{asc}) \right> + \gamma_t^2\eta^2 \|\nabla f(\boldsymbol{v}_t^{asc})-\nabla f(\boldsymbol{w}_t^{asc}) \|_2^2 
		\end{aligned}\\
		&\stackrel{\textcircled{1}}{\leq} \begin{aligned}[t]
			&\left(1 - 2 \left(1+\mu \rho \right) \frac{\gamma_t\eta\mu L}{\mu+L}\right)  \|\boldsymbol{v}_{t} - \boldsymbol{w}_{t} \|_2^2 - 2\left(\frac{\mu\rho}{L^2} + \rho^2\right)\frac{\gamma_t\eta\mu L}{\mu+L} \|\nabla f(\boldsymbol{v}_t)-\nabla f(\boldsymbol{w}_t) \|_2^2\\
			&\quad+2\gamma_t\rho\eta \left<\nabla f(\boldsymbol{v}_{t}) - \nabla f(\boldsymbol{w}_{t}), \nabla f(\boldsymbol{v}_t^{asc})-\nabla f(\boldsymbol{w}_t^{asc}) \right> + \left(\gamma_t^2\eta^2- \frac{2\gamma_t\eta}{\mu + L}\right) \|\nabla f(\boldsymbol{v}_t^{asc})-\nabla f(\boldsymbol{w}_t^{asc}) \|_2^2 
		\end{aligned}\\
		&\stackrel{\textcircled{2}}{\leq} \begin{aligned}[t]
			&\left(1 - 2 \left(1+\mu \rho \right) \frac{\gamma_t\eta\mu L}{\mu+L}\right)  \|\boldsymbol{v}_{t} - \boldsymbol{w}_{t} \|_2^2 + \left[\frac{\rho^2\gamma_t\eta}{\frac{2}{\mu+L} - \gamma_t\eta} - 2\left(\frac{\mu\rho}{L^2} + \rho^2\right)\frac{\gamma_t\eta\mu L}{\mu+L}\right] \|\nabla f(\boldsymbol{v}_t)-\nabla f(\boldsymbol{w}_t) \|_2^2\\
			&\quad+ \left(\gamma_t^2\eta^2- \frac{2\gamma_t\eta}{\mu + L}\right)\left[\left(\nabla f\left(\boldsymbol{v}_t^{asc}\right)-\nabla f\left(\boldsymbol{w}_t^{asc}\right)\right) - \frac{\rho}{\frac{2}{\mu+L} - \gamma_t\eta}\left(\nabla f\left(\boldsymbol{v}_{t}\right) - \nabla f\left(\boldsymbol{w}_{t}\right)\right) \right]^2
		\end{aligned}\\
		&\stackrel{\textcircled{3}}{\leq}{}\left(1 - 2 \left(1+\mu\rho\right) \frac{\gamma_t\eta\mu L}{\mu+L}\right)  \|\boldsymbol{v}_{t} - \boldsymbol{w}_{t} \|_2^2,
	\end{align*}
	where $\textcircled{1}$ is due to the coercivity of the loss function (cf. Appendix \ref{appendix: coeveritty of strongly convex})  that
	\begin{equation*}
		\left<\nabla f(\boldsymbol{v}_t^{asc})-\nabla f(\boldsymbol{w}_t^{asc}), \boldsymbol{v}_{t}^{asc} - \boldsymbol{w}_t^{asc} \right> \geq \frac{\mu L}{\mu + L}  \|\boldsymbol{v}_{t}^{asc} - \boldsymbol{w}_t^{asc} \|_2^2 + \frac{1}{\mu + L} \|\nabla f(\boldsymbol{v}_t^{asc})-\nabla f(\boldsymbol{w}_t^{asc}) \|_2^2.
	\end{equation*}
	Moreover, \textcircled{3} holds since the last two terms of \textcircled{2} are smaller than zero provided that the learning rate $\eta$ satisfies the given condition.
	Consequently, we have
	\begin{equation*}
		\|\boldsymbol{v}_{t+1} - \boldsymbol{w}_{t+1} \|_2 \leq \left(1 - 2 \left(1+\mu\rho\right) \frac{\gamma_t\eta\mu L}{\mu+L}\right)^{1/2}  \|\boldsymbol{v}_{t} - \boldsymbol{w}_{t} \|_2\leq \left(1 -  \left(1+\mu\rho\right) \frac{\gamma_t\eta\mu L}{\mu+L}\right)  \|\boldsymbol{v}_{t} - \boldsymbol{w}_{t} \|_2,
	\end{equation*}
	where the last inequality is due to the fact that $\sqrt{1-x}\leq 1-x/2$ holds for all $x\in[0, 1]$.
\end{proof}
On the other hand, when the examples selected from $S$ and $S^\prime$ are different, we can obtain a similar result as Lemma \ref{lemma: sam different example}.
\begin{lemma}
	\label{lemma: tight stablesam different example}
	Assume the same settings as in Lemma \ref{lemma: tight stablesam same example}. For the $t$-th iteration, suppose that the examples selected by SSAM are different in $S$ and $S^\prime$ and that $\boldsymbol{w}_{t+1} = \boldsymbol{w}_t - \eta\gamma_t \nabla f(\boldsymbol{w}_t^{asc}, z)$ and $\boldsymbol{v}_{t+1} = \boldsymbol{v}_t - \eta\gamma_t \nabla f(\boldsymbol{v}_t^{asc}, z^\prime)$. Consequently, we obtain
	\begin{equation*}
		\|\boldsymbol{v}_{t+1} - \boldsymbol{w}_{t+1} \|_2 \leq \left(1 -  \left(1+\mu\rho\right) \frac{\gamma_t\eta\mu L}{\mu+L}\right) \|\boldsymbol{v}_t - \boldsymbol{w}_t \|_2 + 2\eta \gamma_t G.
	\end{equation*}
\end{lemma}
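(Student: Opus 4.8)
The plan is to mirror the proof of Lemma~\ref{lemma: sam different example} verbatim, simply carrying the renormalization factor $\gamma_t$ through every step, and to invoke the already-established contraction from Lemma~\ref{lemma: tight stablesam same example} for the part of the bound that behaves like a same-example update. The whole difficulty of the different-example case is that the two iterates descend along gradients of \emph{different} examples ($z$ versus $z^\prime$), so one cannot directly apply the coercivity argument that gave contraction; the standard remedy is an add-and-subtract trick that isolates a same-example contribution and collects the discrepancy into a single error term controlled by Lipschitzness.

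Concretely, I would start from the SSAM update rule and write
\begin{equation*}
\boldsymbol{v}_{t+1} - \boldsymbol{w}_{t+1} = \left(\boldsymbol{v}_t - \eta\gamma_t\nabla f(\boldsymbol{v}_t^{asc}, z^\prime)\right) - \left(\boldsymbol{w}_t - \eta\gamma_t\nabla f(\boldsymbol{w}_t^{asc}, z^\prime)\right) - \eta\gamma_t\left(\nabla f(\boldsymbol{w}_t^{asc}, z^\prime) - \nabla f(\boldsymbol{w}_t^{asc}, z)\right),
\end{equation*}
where I have inserted and subtracted $\eta\gamma_t\nabla f(\boldsymbol{w}_t^{asc}, z^\prime)$. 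Taking the Euclidean norm and applying the triangle inequality splits the right-hand side into two pieces: a term in which both iterates are updated using the \emph{same} example $z^\prime$, and a residual term measuring the mismatch between the descent gradients at $\boldsymbol{w}_t^{asc}$ evaluated on $z^\prime$ versus $z$.

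For the first piece, I would observe that it is exactly the form governed by Lemma~\ref{lemma: tight stablesam same example} (a same-example SSAM step, since $\gamma_t$ is a common scalar here), so it is upper bounded by $\left(1 - (1+\mu\rho)\frac{\gamma_t\eta\mu L}{\mu+L}\right)\|\boldsymbol{v}_t - \boldsymbol{w}_t\|_2$. For the residual, I would use the $G$-Lipschitz assumption, which bounds each per-example gradient norm by $G$, so that by a further triangle inequality $\|\nabla f(\boldsymbol{w}_t^{asc}, z^\prime) - \nabla f(\boldsymbol{w}_t^{asc}, z)\|_2 \leq 2G$, giving a contribution of $2\eta\gamma_t G$. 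Summing the two bounds yields the claimed inequality.

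I do not expect a genuine obstacle here, since the renormalization factor enters only as a multiplicative scalar on the descent gradient and therefore propagates through the argument in the same way $\eta$ does in the SAM case. The only point requiring a little care is confirming that the learning-rate constraint \eqref{eq: ssam learning rate constraint} needed to apply Lemma~\ref{lemma: tight stablesam same example} is already in force under the shared hypotheses; this is guaranteed because the present lemma explicitly assumes the same settings as Lemma~\ref{lemma: tight stablesam same example}.
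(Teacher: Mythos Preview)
Your proposal is correct and follows essentially the same route as the paper's proof: add and subtract $\eta\gamma_t\nabla f(\boldsymbol{w}_t^{asc}, z^\prime)$, apply the triangle inequality, invoke Lemma~\ref{lemma: tight stablesam same example} for the same-example piece, and bound the residual by $2\eta\gamma_t G$ via $G$-Lipschitzness. There is nothing to add.
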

\begin{proof}
	The proof is straightforward. 
	It follows immediately from
	\begin{equation*}
		\begin{aligned}
			\|\boldsymbol{v}_{t+1} - \boldsymbol{w}_{t+1} \|_2  & =  \|\boldsymbol{v}_t - \eta\gamma_t\nabla f(\boldsymbol{v}_t^{asc}, z^\prime) - \left(\boldsymbol{w}_t - \eta\gamma_t\nabla f(\boldsymbol{v}_t^{asc}, z^\prime)\right) - \eta\gamma_t\left(\nabla f(\boldsymbol{v}_t^{asc}, z^\prime) - \nabla f(\boldsymbol{w}_t^{asc}, z)\right) \|_2  \\
			& \leq  \|\boldsymbol{v}_t - \eta\gamma_t\nabla f(\boldsymbol{v}_t^{asc}, z^\prime) - \left(\boldsymbol{w}_t - \eta\gamma_t\nabla f(\boldsymbol{v}_t^{asc}, z^\prime)\right)  \|_2 + \eta\gamma_t  \|\nabla f(\boldsymbol{v}_t^{asc}, z^\prime) - \nabla f(\boldsymbol{w}_t^{asc}, z) \|_2 \\
			&\leq \left(1 -  \left(1+\mu\rho\right) \frac{\gamma_t\eta\mu L}{\mu+L}\right)  \|\boldsymbol{v}_{t} - \boldsymbol{w}_{t} \|_2 + 2\eta\gamma_t G,
		\end{aligned}
	\end{equation*}
	thus concluding the proof.
\end{proof}
With the above two lemmas, we can show that SSAM consistently performs better than SAM in terms of the generalization error.
Before that, we need to introduce an auxiliary lemma as follows.
\begin{lemma}
	\label{lemma: recursive inequality}
	Consider a sequence $\gamma_1, \ldots, \gamma_T$, where $0 < \gamma_k < 1$ for any $1\leq k \leq T$.
	Denote the maximum of the first $k$ elements by $\gamma_{max}^k$. Then, for any constants $\alpha>0$, $0 < \beta < 1/\gamma_{max}^T$, and $i=1, 2, \ldots$, the following inequality holds
	\begin{equation*}
		\label{eq: gamma square relation}
		\left(1 - \gamma_{k+1} \beta \right) \Psi(k) + \left(\gamma_{k+1}\right)^i\alpha \leq \Psi(k+1),
	\end{equation*}
	where
	\begin{equation*}
		\Psi(k) = \left[\left(1 - \gamma_{max}^k\beta\right)^{k-1} + \ldots + \left(1 - \gamma_{max}^k\beta\right) + 1\right] \left(\gamma_{max}^k\right)^i\alpha.
	\end{equation*}
\end{lemma}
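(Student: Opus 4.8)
The plan is to recast $\Psi(k)$ as the evaluation of a one-parameter family of functions and then argue by monotonicity together with a short case split. Write $m_k := \gamma_{max}^k = \max\{\gamma_1,\dots,\gamma_k\}$, which is nondecreasing in $k$, and for $\gamma\in(0,1/\beta)$ define
\begin{equation*}
	\Phi_k(\gamma) := \left(\sum_{j=0}^{k-1}(1-\gamma\beta)^j\right)\gamma^i\alpha = \frac{\gamma^{i-1}\bigl(1-(1-\gamma\beta)^k\bigr)}{\beta}\,\alpha,
\end{equation*}
so that $\Psi(k)=\Phi_k(m_k)$. Since $\beta<1/m_T$ and $m_k\le m_T$, every base obeys $1-m_k\beta\in(0,1)$, so all summands are positive and the closed form is valid. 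The geometric-series identity $\sum_{j=0}^{k}x^j = 1 + x\sum_{j=0}^{k-1}x^j$ yields the one-step recursion $\Phi_{k+1}(\gamma)=\gamma^i\alpha+(1-\gamma\beta)\Phi_k(\gamma)$, and in particular $\Psi(k+1)=\Phi_{k+1}(m_{k+1})=m_{k+1}^i\alpha+(1-m_{k+1}\beta)\Phi_k(m_{k+1})$.

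The key auxiliary fact I would establish is that $\gamma\mapsto\Phi_k(\gamma)$ is nondecreasing on $(0,1/\beta)$. Using the closed form it suffices to show that $h(\gamma):=\gamma^{i-1}\bigl(1-(1-\gamma\beta)^k\bigr)$ is nondecreasing; differentiating gives
\begin{equation*}
	h'(\gamma) = (i-1)\gamma^{i-2}\bigl(1-(1-\gamma\beta)^k\bigr) + k\beta\,\gamma^{i-1}(1-\gamma\beta)^{k-1},
\end{equation*}
and both summands are nonnegative for $i\ge 1$, $\gamma>0$, and $1-\gamma\beta\in(0,1)$ (for $i=1$ the first term vanishes but the second is strictly positive). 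This monotonicity is the engine of the whole argument.

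With it in hand I would split on whether $\gamma_{k+1}$ raises the running maximum. If $\gamma_{k+1}\ge m_k$, then $m_{k+1}=\gamma_{k+1}$ and the recursion at $\gamma_{k+1}$ gives $\Psi(k+1)=\gamma_{k+1}^i\alpha+(1-\gamma_{k+1}\beta)\Phi_k(\gamma_{k+1})$; since $1-\gamma_{k+1}\beta>0$ and $\Phi_k(\gamma_{k+1})\ge\Phi_k(m_k)=\Psi(k)$ by monotonicity, the claimed inequality follows immediately. If instead $\gamma_{k+1}<m_k$, then $m_{k+1}=m_k$, and a direct substitution shows that $\Psi(k+1)-\bigl[(1-\gamma_{k+1}\beta)\Psi(k)+\gamma_{k+1}^i\alpha\bigr]$ equals $(m_k^i-\gamma_{k+1}^i)\alpha-\beta(m_k-\gamma_{k+1})\Phi_k(m_k)$.

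The genuinely delicate case is this second one, and I expect it to be the main obstacle: the base $1-m_k\beta$ is unchanged while the prefactor only gains $\gamma_{k+1}^i<m_k^i$, so monotonicity alone is insufficient and I must compare the two nonnegative contributions quantitatively. The idea is to bound $\beta\Phi_k(m_k)=\bigl(1-(1-m_k\beta)^k\bigr)m_k^{i-1}\alpha\le m_k^{i-1}\alpha$, after which it suffices to verify the elementary inequality $m_k^i-\gamma_{k+1}^i\ge(m_k-\gamma_{k+1})m_k^{i-1}$. This rearranges to $\gamma_{k+1}\bigl(m_k^{i-1}-\gamma_{k+1}^{i-1}\bigr)\ge 0$, which holds because $m_k\ge\gamma_{k+1}>0$ and $i\ge 1$. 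This closes the remaining case and completes the proof.
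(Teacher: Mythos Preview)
Your proof is correct and follows essentially the same approach as the paper: both split into the cases $\gamma_{k+1}\ge m_k$ and $\gamma_{k+1}<m_k$, and in the second case both use the bound $\beta\Psi(k)=(1-(1-m_k\beta)^k)m_k^{i-1}\alpha\le m_k^{i-1}\alpha$ followed by the elementary inequality $\gamma_{k+1}(m_k^{i-1}-\gamma_{k+1}^{i-1})\ge 0$. Your explicit monotonicity argument for $h(\gamma)=\gamma^{i-1}\bigl(1-(1-\gamma\beta)^k\bigr)$ is precisely the fact the paper relies on (but does not spell out) when it asserts nonnegativity of the bracketed difference in its first case, so your write-up is in fact a bit more complete there.
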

\begin{proof}
	To prove this result, we only need to substitute $\Psi(k)$ in. In the case of $\gamma_{k+1} \geq \gamma_{max}^k$, we have
	\begin{equation*}
		\begin{aligned}
			& \Psi(k+1) - \left(1 - \gamma_{k+1} \beta \right) \Psi(k) - \left(\gamma_{k+1}\right)^i\alpha \\
			& = \frac{\alpha}{\beta}\left(1 - \gamma_{k+1} \beta \right)\left\{\left(\gamma_{k+1}\right)^{i-1}\left[1 - \left(1 - \gamma_{k+1} \beta \right)^k\right] - \left(\gamma_{max}^k\right)^{i-1}\left[1 - \left(1 - \gamma_{max}^k \beta \right)^k\right]\right\} \geq 0.
		\end{aligned}
	\end{equation*}
	In the case of $\gamma_{k+1} \leq \gamma_{max}^k$, we also have
	\begin{equation*}
		\begin{aligned}
			\Psi(k+1) - \left(1 - \gamma_{k+1} \beta \right) \Psi(k) - \left(\gamma_{k+1}\right)^i\alpha \geq \alpha \gamma_{k+1}\left[\left(\gamma_{max}^k\right)^{i-1} - \left(\gamma_{k+1}\right)^{i-1}\right] \geq 0,
		\end{aligned}
	\end{equation*}
	thus concluding the proof.
\end{proof}
\begin{theorem}
	\label{theorem: stablesam tight generalization error}
	Under assumptions and parameter settings in Lemmas \ref{lemma: tight stablesam same example} and \ref{lemma: tight stablesam different example}.
	Suppose we run the SSAM iteration with constant learning rate $\eta$ satisfying \eqref{eq: ssam learning rate constraint} for $T$ steps.
	Then, SSAM satisfies uniform stability with
	\begin{equation*}
		\varepsilon_{\mathrm{gen}}^\mathrm{ssam} \leq \frac{2G^2(\mu + L)}{n \mu L(1+\mu\rho)}\left\{1 -\left[1- \left(1+\mu\rho\right)\frac{\gamma_\mathrm{upp}\eta \mu L}{\mu + L}\right]^T\right\}.
	\end{equation*}
\end{theorem}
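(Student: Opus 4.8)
The plan is to mirror the proof of Theorem~\ref{theorem: sam stability}, but to absorb the time-varying renormalization factor $\gamma_t$ through the auxiliary Lemma~\ref{lemma: recursive inequality}. First I would set $\delta_t = \|\boldsymbol{w}_t - \boldsymbol{v}_t\|_2$ with $\delta_0 = 0$ (the two runs share initialization and mini-batch indices), and combine Lemmas~\ref{lemma: tight stablesam same example} and~\ref{lemma: tight stablesam different example}: at each step the examples drawn from $S$ and $S^\prime$ agree with probability $1-1/n$, yielding pure contraction, and differ with probability $1/n$, yielding contraction plus the additive term $2\eta\gamma_t G$. Averaging the two cases, the contraction coefficients coincide and I obtain the one-step recursion
\begin{equation*}
	\mathbb{E}[\delta_t] \leq \left(1 - \gamma_t\beta\right)\mathbb{E}[\delta_{t-1}] + \gamma_t\alpha, \qquad \beta = (1+\mu\rho)\tfrac{\eta\mu L}{\mu+L}, \quad \alpha = \tfrac{2\eta G}{n}.
\end{equation*}

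The essential departure from the SAM argument is that $\gamma_t$ now depends on $t$, so the geometric series no longer telescopes cleanly. Here I would invoke Lemma~\ref{lemma: recursive inequality} with $i=1$ and the $\alpha,\beta$ above: it states precisely that the running-maximum surrogate $\Psi(k)$ obeys $(1-\gamma_{k+1}\beta)\Psi(k)+\gamma_{k+1}\alpha\leq\Psi(k+1)$. Feeding this into the recursion, a straightforward induction (with base case $\mathbb{E}[\delta_1]\leq\gamma_1\alpha=\Psi(1)$) gives $\mathbb{E}[\delta_T]\leq\Psi(T)$. Before applying the lemma I must verify its hypothesis $\beta<1/\gamma_{max}^T$; this follows from the learning-rate constraint~\eqref{eq: ssam learning rate constraint} together with the inequality noted in the remark following Lemma~\ref{lemma: sam same example}, which gives $\gamma_\mathrm{upp}\eta<\frac{2}{(1+\mu\rho)(\mu+L)}$ and hence $\gamma_{max}^T\beta\leq\gamma_\mathrm{upp}\beta<\frac{2\mu L}{(\mu+L)^2}\leq\frac12<1$ by the arithmetic–geometric mean inequality $(\mu+L)^2\geq 4\mu L$.

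Finally, I would evaluate the geometric sum in closed form, $\Psi(T)=\frac{\alpha}{\beta}\left[1-(1-\gamma_{max}^T\beta)^T\right]$, and replace the running maximum $\gamma_{max}^T$ by the uniform bound $\gamma_\mathrm{upp}$ of Assumption~\ref{assumption: bounded renormalization ratio}. This replacement is valid because $h(\gamma)=\frac{\alpha}{\beta}\left[1-(1-\gamma\beta)^T\right]$ is increasing on $[0,1/\beta)$, so $\Psi(T)=h(\gamma_{max}^T)\leq h(\gamma_\mathrm{upp})$. Substituting $\alpha/\beta=\frac{2G(\mu+L)}{n\mu L(1+\mu\rho)}$ and inserting the resulting bound on $\mathbb{E}[\delta_T]$ into the Lipschitz inequality~\eqref{eq: stability lipschitz inequality} (i.e.\ multiplying by $G$) delivers the claimed bound on $\varepsilon_{\mathrm{gen}}^{\mathrm{ssam}}$. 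I expect the main obstacle to be exactly the time dependence of $\gamma_t$: the bookkeeping showing that the running-maximum surrogate dominates the true recursion is what Lemma~\ref{lemma: recursive inequality} discharges, so the only work left inside this theorem is checking that lemma's feasibility condition and justifying the monotone replacement of $\gamma_{max}^T$ by $\gamma_\mathrm{upp}$.
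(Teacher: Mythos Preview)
Your proposal is correct and follows essentially the same route as the paper: set up the one-step recursion from Lemmas~\ref{lemma: tight stablesam same example} and~\ref{lemma: tight stablesam different example}, then invoke Lemma~\ref{lemma: recursive inequality} to control the time-varying $\gamma_t$ and sum the resulting geometric series. You are in fact more careful than the paper in two places---explicitly verifying the feasibility hypothesis $\gamma_{\mathrm{upp}}\beta<1$ of Lemma~\ref{lemma: recursive inequality} and justifying the monotone passage from $\gamma_{\max}^T$ to $\gamma_{\mathrm{upp}}$---whereas the paper jumps directly to the $\gamma_{\mathrm{upp}}$ bound.
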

\begin{proof}
	Define $\delta_t=  \|\boldsymbol{w}_t - \boldsymbol{v}_t \|_2$ to denote the Euclidean distance between $\boldsymbol{w}_t$ and $\boldsymbol{v}_t$ as training continues.
	Observe that at any step $t\leq T$, with a probability $1-1/n$, the selected examples from $S$ and $S^\prime$ are the same.
	In contrast, with a probability of $1/n$, the selected examples are different. This is because $S$ and $S^\prime$ only differ by one example. Therefore, from Lemmas \ref{lemma: tight stablesam same example} and \ref{lemma: tight stablesam different example}, we conclude that
	\begin{equation*}
		\begin{aligned}
			\mathbb{E}[\delta_t] &\leq \left(1-\frac{1}{n}\right)\left(1 -  \left(1+\mu\rho\right) \frac{\gamma_t\eta\mu L}{\mu+L}\right)\mathbb{E}[\delta_{t-1}] + \frac{1}{n}\left(1 -  \left(1+\mu\rho\right) \frac{\gamma_t\eta\mu L}{\mu+L}\right)\mathbb{E}[\delta_{t-1}] + \frac{2\eta \gamma_tG}{n} \\
			&= \left(1 -  \left(1+\mu\rho\right) \frac{\gamma_t\eta\mu L}{\mu+L}\right)\mathbb{E}[\delta_{t-1}] + \frac{2\eta \gamma_tG}{n}.
		\end{aligned}
	\end{equation*}
	Write $\beta = \eta\mu L\left(1+\mu\rho\right)/\left(\mu + L\right)$ and $\alpha=2\eta G/n$, we then unravel the above recursion and obtain from Lemma \ref{lemma: recursive inequality} that 
	\begin{equation*}
		\mathbb{E}[\delta_T] \leq \gamma_\mathrm{upp} \alpha\sum_{t=0}^{T-1}\left(1 - \gamma_\mathrm{upp}\beta\right)^t = \frac{2G(\mu + L)}{n \mu L(1+\mu\rho)}\left\{1 -\left[1- \left(1+\mu\rho\right)\frac{\gamma_\mathrm{upp}\eta \mu L}{\mu + L}\right]^T\right\}.
	\end{equation*}
	Plugging this inequality into (\ref{eq: stability lipschitz inequality}), we complete the proof.
\end{proof}
\begin{remark}
	Compared to SAM, this theorem indicates that the bound over generalization error can be further reduced by SSAM because the extra term $\gamma_\mathrm{upp}$ is smaller than $1$.
\end{remark}
\subsubsection{Convergence}
Similar to Theorem \ref{theorem: sam convergence}, we show that SSAM also converges to a noisy ball when the learning rate $\eta$ is fixed.
\begin{theorem}
	\label{theorem: stablesam convergence}
	Let Assumption \ref{assumption: bounded renormalization ratio} hold and suppose the loss function $f(\boldsymbol{w}, z)$ is $\mu$-strongly convex, $L$-smooth, and $G$-Lipschitz continuous with respect to the first argument $\boldsymbol{w}$. 
	Consider the sequence $\boldsymbol{w}_0, \boldsymbol{w}_1, \cdots, \boldsymbol{w}_T$ generated by running SSAM with a constant learning rate $\eta$ for $T$ steps. 
	Let $\boldsymbol{w}^* \in \arg \inf_{\boldsymbol{w}} F_S(\boldsymbol{w})$, it follows that
	\begin{equation*}
		\begin{aligned}
			\varepsilon_{\mathrm{opt}}^{\mathrm{ssam}} = \mathbb{E} \left[F_S(\boldsymbol{w}_T)-F_S(\boldsymbol{w}^{*})\right] 
			\leq \left[1 - \gamma_\mathrm{upp}\eta\mu\rho(\mu + L)\right]^T \mathbb{E}\left[F_S(\boldsymbol{w}_0) - F_S(\boldsymbol{w}^{*})\right] + \frac{LG^2\left(\rho^2L + \gamma_\mathrm{upp}\eta\right)}{4\mu\rho(\mu + L)}.
		\end{aligned}
	\end{equation*}
\end{theorem}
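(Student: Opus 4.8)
The plan is to mirror the proof of Theorem~\ref{theorem: sam convergence}, carrying the extra renormalization factor $\gamma_t$ through the SAM convergence argument and then neutralizing its randomness with Assumption~\ref{assumption: bounded renormalization ratio}. First I would apply Taylor's theorem to $F_S$ along the SSAM update $\boldsymbol{w}_{t+1} = \boldsymbol{w}_t - \eta\gamma_t\nabla f(\boldsymbol{w}_t^{asc})$, bound the Hessian in the second-order remainder by $L$ (using $L$-smoothness) and the descent gradient by $G$ (using $G$-Lipschitzness), to obtain the descent inequality
\begin{equation*}
	F_S(\boldsymbol{w}_{t+1}) \leq F_S(\boldsymbol{w}_t) - \eta\gamma_t\left\langle\nabla f(\boldsymbol{w}_t^{asc}), \nabla F_S(\boldsymbol{w}_t)\right\rangle + \frac{\eta^2\gamma_t^2 LG^2}{2}.
\end{equation*}
This is precisely the SAM descent inequality with $\eta$ replaced by $\eta\gamma_t$ in the linear term and $\eta^2$ by $\eta^2\gamma_t^2$ in the quadratic term.

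Next I would lower-bound the inner product. Because Lemma~\ref{lemma: sam descent alignment} involves only $\nabla f(\boldsymbol{w}_t^{asc})$ and $\nabla F_S(\boldsymbol{w}_t)$ and is independent of the renormalization step, it applies unchanged and gives $\mathbb{E}\langle\nabla f(\boldsymbol{w}_t^{asc}), \nabla F_S(\boldsymbol{w}_t)\rangle \geq \rho(\mu+L)\|\nabla F_S(\boldsymbol{w}_t)\|_2^2 - \rho^2 L^2 G^2 / 2$. Substituting this and then applying the Polyak-\L{}ojasiewicz inequality $\|\nabla F_S(\boldsymbol{w}_t)\|_2^2 \geq 2\mu[F_S(\boldsymbol{w}_t) - F_S(\boldsymbol{w}^*)]$ (valid since $F_S$ is $\mu$-strongly convex, exactly as in Theorem~\ref{theorem: sam convergence}) turns the gradient term into a contraction on the gap $\delta_t := \mathbb{E}[F_S(\boldsymbol{w}_t) - F_S(\boldsymbol{w}^*)]$. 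This produces a one-step recursion whose contraction coefficient and additive noise both carry the factor $\gamma_t$, and whose quadratic noise I would collapse to first order in $\gamma_t$ via $\gamma_t^2 \leq \gamma_{\mathrm{upp}}\gamma_t$, so that the recursion takes the form $\delta_{t+1} \leq (1 - \gamma_t\beta)\delta_t + \gamma_t\alpha$ with $\beta = 2\eta\mu\rho(\mu+L)$ and $\alpha = \tfrac{1}{2}\eta LG^2(\rho^2 L + \gamma_{\mathrm{upp}}\eta)$.

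The main obstacle is passing from this variable-coefficient recursion to a uniform geometric bound. The factor $\gamma_t = \|\nabla F_{\Omega_t}(\boldsymbol{w}_t)\|_2 / \|\nabla F_{\Omega_t}(\boldsymbol{w}_t^{asc})\|_2$ is computed on the same mini-batch as the gradients, so it is correlated with the inner product and cannot be pulled out of the expectation; worse, Assumption~\ref{assumption: bounded renormalization ratio} furnishes only an upper bound $\gamma_t \leq \gamma_{\mathrm{upp}}$ together with positivity, not a uniform positive lower bound, so the naive substitution $\gamma_t \mapsto \gamma_{\mathrm{upp}}$ is not directly licensed for the contraction factor. I would handle this exactly as in the stability analysis, invoking Lemma~\ref{lemma: recursive inequality} with $i=1$ to dominate the variable-coefficient recursion by the constant-coefficient surrogate driven by $\gamma_{\mathrm{upp}}$; unlike the stability setting, however, here $\delta_0 \neq 0$, so the transient term $\prod_t(1-\gamma_t\beta)\,\delta_0$ must also be controlled, and it is this term (rather than the additive sum) that carries the weaker SSAM rate $[1-\gamma_{\mathrm{upp}}\eta\mu\rho(\mu+L)]$ in place of the sharper SAM rate $[1-2\eta\mu\rho(\mu+L)]$.

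Finally, unrolling the dominated recursion over $T$ steps and summing the geometric series collapses the additive contribution to $\alpha/\beta = LG^2(\rho^2 L + \gamma_{\mathrm{upp}}\eta)/[4\mu\rho(\mu+L)]$, yielding the stated noisy-ball bound. As a sanity check, letting $\gamma_{\mathrm{upp}}\to 1$ recovers the asymptotic SAM constant of Theorem~\ref{theorem: sam convergence}, while the transient rate stays a factor of two weaker, consistent with the degradation introduced by the uniform control of $\gamma_t$.
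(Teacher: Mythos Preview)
Your approach mirrors the paper's proof step for step: Taylor expansion along the SSAM update, unchanged invocation of Lemma~\ref{lemma: sam descent alignment}, the PL inequality, and then Lemma~\ref{lemma: recursive inequality} to unroll the variable-coefficient recursion. Your explicit device of bounding $\gamma_t^2 \leq \gamma_{\mathrm{upp}}\gamma_t$ to cast the additive term into the $i=1$ form of Lemma~\ref{lemma: recursive inequality} is a clean way to land on the asymptotic constant $\alpha/\beta = LG^2(\rho^2 L + \gamma_{\mathrm{upp}}\eta)/[4\mu\rho(\mu+L)]$, and is exactly what the paper's final expression requires (the paper does not spell this reduction out).

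The obstacle you flag with the transient term, however, is a genuine gap that neither you nor the paper actually closes. Lemma~\ref{lemma: recursive inequality} is built for the stability setting where $\delta_0=0$ and controls only the additive sum; it says nothing about $\prod_t(1-\gamma_t\beta)\,\delta_0$. Assumption~\ref{assumption: bounded renormalization ratio} provides only $0<\gamma_t\leq\gamma_{\mathrm{upp}}$, so $1-\gamma_t\beta \geq 1-\gamma_{\mathrm{upp}}\beta$ and hence $\prod_t(1-\gamma_t\beta) \geq (1-\gamma_{\mathrm{upp}}\beta)^T$---the inequality runs the \emph{wrong} way. Your assertion that this term ``carries the weaker rate $[1-\gamma_{\mathrm{upp}}\eta\mu\rho(\mu+L)]$'' is therefore not supported by the stated hypotheses; bounding the transient decay would require a uniform \emph{lower} bound on $\gamma_t$, which the assumption does not supply. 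The paper's proof simply writes ``recursively applying Lemma~\ref{lemma: recursive inequality}'' at this point and moves on, so your proposal is faithful to the paper, but the step you correctly isolate as the main obstacle remains unjustified in both.
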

\begin{proof}
	The proof follows the same steps as Theorem \ref{theorem: sam excess risk}.
	From Taylor's theorem, there exists a $\widehat{\boldsymbol{w}}_t$ such that
	\begin{equation*}
		\begin{aligned}
			F_S(\boldsymbol{w}_{t+1}) & = F_S\left(\boldsymbol{w}_t - \gamma_t\eta \nabla f(\boldsymbol{w}_{t}^{asc})\right) \\
			& = F_S(\boldsymbol{w}_t) - \gamma_t\eta \left<\nabla f(\boldsymbol{w}_{t}^{asc}), \nabla F_S(\boldsymbol{w}_{t}) \right> + \frac{\gamma_t^2\eta^2}{2}\nabla f(\boldsymbol{w}_{t}^{asc})^T \nabla^2 F_S(\widehat{\boldsymbol{w}}_t)\nabla f(\boldsymbol{w}_{t}^{asc}) \\
			& \leq F_S(\boldsymbol{w}_t) - \gamma_t\eta \left<\nabla f(\boldsymbol{w}_{t}^{asc}), \nabla F_S(\boldsymbol{w}_{t}) \right> + \frac{\gamma_t^2\eta^2 L}{2}  \|\nabla f(\boldsymbol{w}_{t}^{asc}) \|_2^2 \\
			& \leq F_S(\boldsymbol{w}_t) - \gamma_t\eta \left<\nabla f(\boldsymbol{w}_{t}^{asc}), \nabla F_S(\boldsymbol{w}_{t}) \right>+ \frac{\gamma_t^2\eta^2 LG^2}{2}.
		\end{aligned}
	\end{equation*}
	According to Lemma \ref{lemma: sam descent alignment}, it follows that
	\begin{equation*}
		\begin{aligned}
			\mathbb{E} \left<\nabla f(\boldsymbol{w}_t^{asc}), \nabla F_S(\boldsymbol{w}_t)\right> 
			& \geq \rho(\mu + L)\left\|\nabla F_S(\boldsymbol{w}_t)\right\|_2^2 - \frac{\rho^2L^2G^2}{2} \\
			&\geq 2\mu\rho(\mu + L)\left[F_S(\boldsymbol{w}_t) - F_S(\boldsymbol{w}^{*})\right] - \frac{\rho^2L^2G^2}{2},
		\end{aligned}
	\end{equation*}
	where the last inequality is due to Polyak-\L{}ojasiewicz condition as a result of being $\mu$-strongly convex.
	Subtracting $F_S(\boldsymbol{w}^{*})$ from both sides and taking expectations, we obtain
	\begin{equation*}
		\begin{aligned}
			\mathbb{E} \left[F_S(\boldsymbol{w}_{t+1})-F_S(\boldsymbol{w}^{*})\right] \leq \left[1 - \gamma_t\eta\mu\rho(\mu + L)\right] \mathbb{E}\left[F_S(\boldsymbol{w}_t) - F_S(\boldsymbol{w}^{*})\right] + \frac{\gamma_t\eta\rho^2L^2G^2}{2}+\frac{\gamma_t^2\eta^2G^2L}{2}.
		\end{aligned}
	\end{equation*}
	Recursively applying Lemma \ref{lemma: recursive inequality} and summing up the geometric series yields
	\begin{equation*}
		\begin{aligned}
			\mathbb{E} \left[F_S(\boldsymbol{w}_T)-F_S(\boldsymbol{w}^{*})\right] 
			\leq \left[1 - \gamma_\mathrm{upp}\eta\mu\rho(\mu + L)\right]^T \mathbb{E}\left[F_S(\boldsymbol{w}_0) - F_S(\boldsymbol{w}^{*})\right] + \frac{LG^2\left(\rho^2L + \gamma_\mathrm{upp}\eta\right)}{4\mu\rho(\mu + L)},
		\end{aligned}
	\end{equation*}
	thus concluding the proof.
\end{proof}
\begin{remark}
	Since we require that $\gamma_\mathrm{upp}$ is smaller than $1$, compared to SAM, this theorem suggests that  SSAM nevertheless slows down the training process. 
\end{remark}
Combining these results, we can present an upper bound over the expected excess risk of the SSAM algorithm as follows.
\begin{theorem}
	\label{theorem: stablesam tight excess risk}
	Under assumptions and parameter settings in Theorems \ref{theorem: stablesam tight generalization error} and \ref{theorem: stablesam convergence}, the expected excess risk $\varepsilon_\mathrm{exc}^\mathrm{ssam}$ of the output $\boldsymbol{w}_T$ obeys $\varepsilon_\mathrm{exc}^\mathrm{ssam}\leq \varepsilon_\mathrm{opt}^\mathrm{ssam} + \varepsilon_\mathrm{gen}^\mathrm{ssam}$, where $\varepsilon_\mathrm{opt}^\mathrm{ssam}$ and $\varepsilon_\mathrm{gen}^\mathrm{ssam}$ are given by Theorems \ref{theorem: stablesam tight generalization error} and \ref{theorem: stablesam convergence}, respectively. Furthermore, as $T$ tends to infinity, we have
	\begin{equation*}
		\varepsilon_\mathrm{exc}^\mathrm{ssam}\leq \frac{2G^2(\mu + L)}{n\mu L (1+\mu{\rho})} + \frac{LG^2\left(\rho^2L + \gamma_\mathrm{upp}\eta\right)}{4\mu\rho(\mu + L)}.
	\end{equation*}
\end{theorem}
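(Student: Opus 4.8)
The plan is to follow the same route as the proof of Theorem \ref{theorem: sam excess risk}: the claim is an assembly of the two bounds already in hand, followed by passing to the limit $T\to\infty$. First I would invoke the excess risk decomposition from the preliminaries, $\varepsilon_{exc} = \varepsilon_{gen} + \varepsilon_{opt} + \varepsilon_{approx}$, together with the observation established there that $\varepsilon_{approx} = \mathbb{E}\left[F_S(\boldsymbol{w}_S^*) - F_S(\boldsymbol{w}_{\mathfrak{D}}^*)\right] \leq 0$, since $\boldsymbol{w}_S^*$ minimizes $F_S$. Specialized to the output $\boldsymbol{w}_T$ returned by SSAM, this immediately gives $\varepsilon_\mathrm{exc}^\mathrm{ssam} \leq \varepsilon_\mathrm{gen}^\mathrm{ssam} + \varepsilon_\mathrm{opt}^\mathrm{ssam}$, which is the first assertion of the theorem and requires nothing beyond the decomposition.

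For the asymptotic bound, I would substitute the finite-$T$ estimates of Theorem \ref{theorem: stablesam tight generalization error} and Theorem \ref{theorem: stablesam convergence} and let $T$ grow. The generalization term equals $\frac{2G^2(\mu + L)}{n \mu L(1+\mu\rho)}\{1 -[1- (1+\mu\rho)\gamma_\mathrm{upp}\eta \mu L/(\mu + L)]^T\}$, while the optimization term equals $[1 - \gamma_\mathrm{upp}\eta\mu\rho(\mu + L)]^T \,\mathbb{E}[F_S(\boldsymbol{w}_0) - F_S(\boldsymbol{w}^{*})] + \frac{LG^2(\rho^2L + \gamma_\mathrm{upp}\eta)}{4\mu\rho(\mu + L)}$. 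Summing these and sending $T\to\infty$ annihilates both $T$-dependent factors, leaving exactly $\frac{2G^2(\mu + L)}{n\mu L (1+\mu{\rho})} + \frac{LG^2(\rho^2L + \gamma_\mathrm{upp}\eta)}{4\mu\rho(\mu + L)}$, as claimed.

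The only point needing care — and the obstacle, such as it is — is to confirm that both geometric factors lie strictly in $(0,1)$, so that their $T$-th powers vanish. For the generalization factor this follows from the contractivity remark after Lemma \ref{lemma: sam same example}: the same argument applies verbatim with $\eta$ replaced by $\gamma_\mathrm{upp}\eta$, which is admissible under \eqref{eq: ssam learning rate constraint}, giving $0 < (1+\mu\rho)\gamma_\mathrm{upp}\eta\mu L/(\mu + L) < 1$. For the optimization factor one verifies $0 < \gamma_\mathrm{upp}\eta\mu\rho(\mu + L) < 1$ using the same learning rate restriction together with $\gamma_\mathrm{upp} < 1$. Granting these, the $T$-dependent contributions decay geometrically to zero and the bound follows as a direct consequence of $T\to\infty$, in complete parallel with Theorem \ref{theorem: sam excess risk}.
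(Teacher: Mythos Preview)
Your proposal is correct and follows exactly the same approach as the paper, which simply states that the result follows immediately as $T\to\infty$. You have merely spelled out the details of the decomposition and the geometric decay that the paper leaves implicit.
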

\begin{proof}
	This result follows immediately as $T\to\infty$.
\end{proof}
\begin{remark}
	This theorem implies that SSAM would eventually achieve a tighter bound over the expected excess risk than SAM when the model is trained for a sufficiently long time.
\end{remark}

\section{Experiments}
\label{sec:experiments}
In this section, we present the empirical results on a range of tasks.
From the perspective of algorithmic stability, we first investigate how SSAM ameliorates the issue of training instability with realistic data sets.
We then provide the convergence results on a quadratic loss function.
To demonstrate that the increased stability does not come at the cost of performance degradation, we also evaluate it on tasks such as training deep classifiers from scratch.
The results suggest that SSAM can achieve comparable or even superior performance compared to SAM.
For completeness, sometimes we also include the results of the standard formulation of SAM proposed by \citet{foret2020sharpness} and denote it by $\mathrm{SAM}^{\ast}$.
\begin{figure}[t]
	\centering
	\begin{subfigure}[b]{0.48\textwidth}
		\centering
		\includegraphics[width=\textwidth, clip, trim= 0 0 0 0]{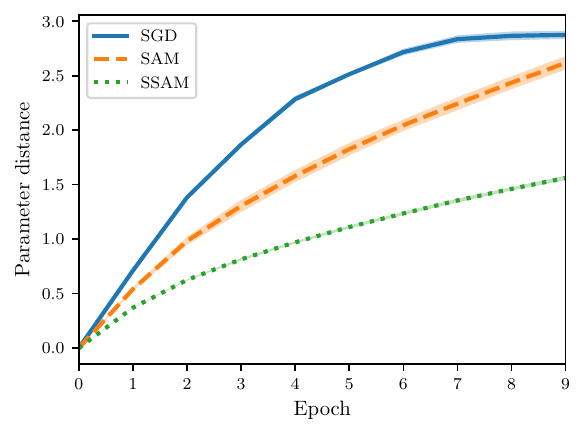}
		\caption{}
	\end{subfigure}
	\begin{subfigure}[b]{0.48\textwidth}
		\centering
		\includegraphics[width=\textwidth, clip, trim= 0 0 0 0]{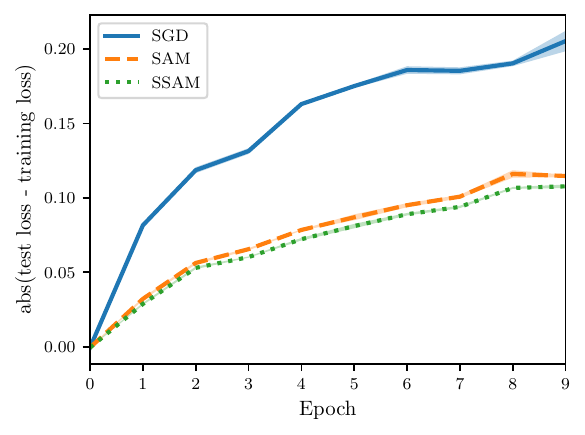}
		\caption{}
	\end{subfigure}
	\caption{Evolution of (a) parameter distance and (b) generalization gap as a function of epoch.
		The base model is a fully connected neural network and the data set is MNIST.
		All models are trained with a constant learning rate and neither momentum nor weight decay is employed.}
	\label{fig: Evolution of stability fcn mnist}
\end{figure}
\begin{figure}[t]
	\centering
	\begin{subfigure}[b]{0.48\textwidth}
		\centering
		\includegraphics[width=\textwidth, clip, trim= 0 0 0 0]{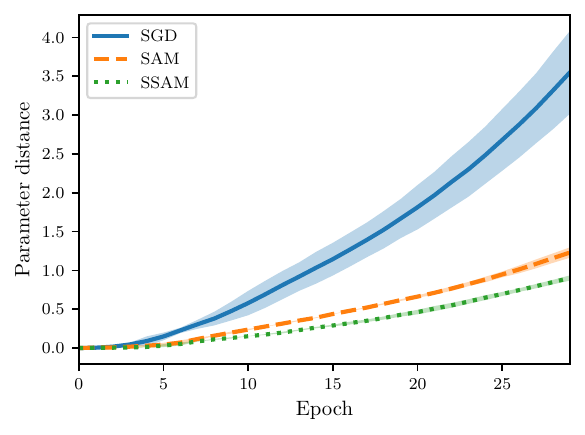}
		\caption{}
	\end{subfigure}
	\begin{subfigure}[b]{0.48\textwidth}
		\centering
		\includegraphics[width=\textwidth, clip, trim= 0 0 0 0]{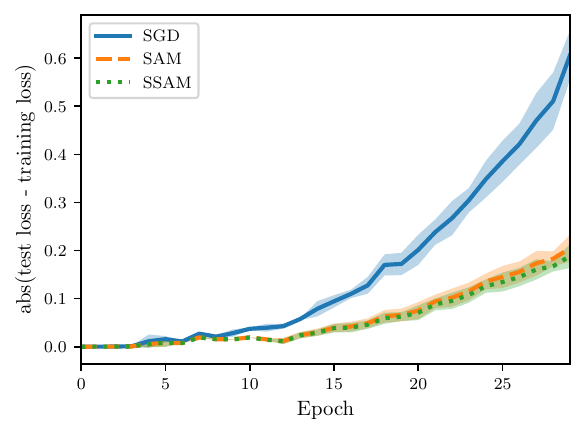}
		\caption{}
	\end{subfigure}
	\caption{Evolution of (a) parameter distance and (b) generalization gap as a function of epoch.
		The base model is LeNet and the data set is CIFAR-10.
		All models are trained with a constant learning rate and neither momentum nor weight decay is employed.}
	\label{fig: Evolution of stability lenet cifar10}
\end{figure}
\subsection{Algorithmic Stability}
In Section \ref{subsection: Expected Excess Risk Analysis of StableSAM}, we showed that SSAM can consistently perform better than SAM in terms of generalization error (see Theorems \ref{theorem: sam stability} and \ref{theorem: stablesam tight generalization error} for a comparison).
To verify this claim empirically, we follow the experimental settings of \citet{hardt2016train} and consider two proxies to measure the algorithmic stability.
The first is the Euclidean distance between the parameters of two identical models, namely, with the same architecture and initialization.
The second proxy is the generalization error which measures the difference between the training error and the test error.
\par
To construct two training sets $S$ and $S^\prime$ that differ in only one example, we first randomly remove an example from the given training set, and the remaining examples naturally constitute one set $S$.
Then we can create another set $S^\prime$ by replacing a random example of $S$ with the one previously deleted.
We restrict our attention to the task of image classification and adopt two different neural architectures: a simple fully connected neural network (FCN) trained on MNIST, and a LeNet \citep{lecun1998gradient} trained on CIFAR-10.
The FCN model consists of two hidden layers of 500 neurons, each of which is followed by a ReLU activation function.
To make our experiments more controllable, we exclude all forms of regularization such as weight decay and dropout.
We use the vanilla SGD (namely, mini-batch size is $1$) without momentum acceleration as the default base optimizer and train each model with a constant learning rate.
Of course, we also fix the random seed at each epoch to ensure that the order of examples in two training sets remains the same.
Additionally, we do not use data augmentation so that the distribution shift between training data and test data is minimal.
Moreover, we record the Euclidean distance and the generalization error once per epoch.
\par
As shown in Figures \ref{fig: Evolution of stability fcn mnist} and \ref{fig: Evolution of stability lenet cifar10}, there is a close correspondence between the parameter distance and the generalization error.
These two quantities often move in tandem and are positively correlated.
Moreover, when starting from the same initialization, models trained by SGD quickly diverge, whereas models trained by SAM and SSAM change slowly.
By comparing the training curves, we can further observe that SSAM is significantly less sensitive than SAM when the training set is modified.
\subsection{Convergence Results}
To empirically validate the convergence results of SAM and SSAM, here we consider a quadratic loss function of dimension $d=20$, 
\begin{equation*}
	f(x) = \frac{1}{2}x^T ({AA^T}/{2d}+\delta \mathbb{I})x,
\end{equation*}
where $A\in\mathbb{R}^{d\times 2d}$ is a random matrix with elements being standard Gaussian noise and $\delta$ is a small positive coefficient to ensure that the loss function is strongly convex.
Starting from a point sampled according to $\mathcal{N}(0, \mathbb{I})$, we optimize the loss function for one million steps with a constant learning rate of $1.0e^{-3}$.
To introduce stochasticity, we also perturb the gradient at each step with random noise from $\mathcal{N}(0, 1.0e^{-4})$.
\begin{figure}[t]
	\centering
	\begin{subfigure}[b]{0.49\textwidth}
		\centering
		\includegraphics[width=\textwidth, clip, trim= 0 0 0 0]{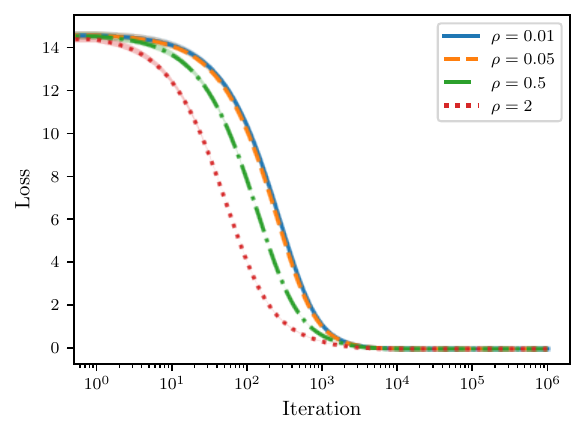}
		\caption{}
	\end{subfigure}
	\begin{subfigure}[b]{0.49\textwidth}
		\centering
		\includegraphics[width=\textwidth, clip, trim= 0 0 0 0]{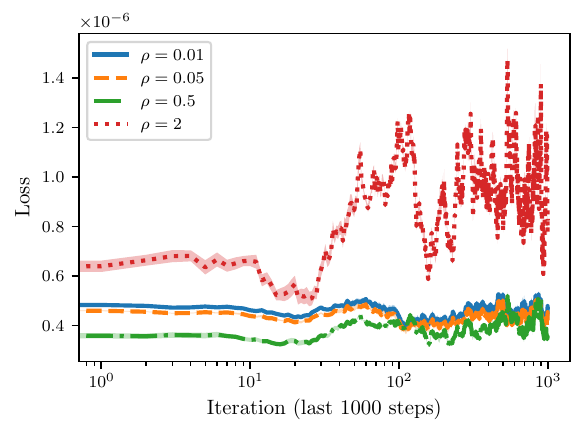}
		\caption{}
	\end{subfigure}
	\caption{The left panel illustrates the loss curves for SAM under different values of perturbation radius $\rho$ and the right panel displays the loss of the last 1000 steps.}
	\label{fig: sam convergence results}
\end{figure}
\begin{figure}[t]
	\centering
	\begin{subfigure}[b]{0.49\textwidth}
		\centering
		\includegraphics[width=\textwidth, clip, trim= 0 0 0 0]{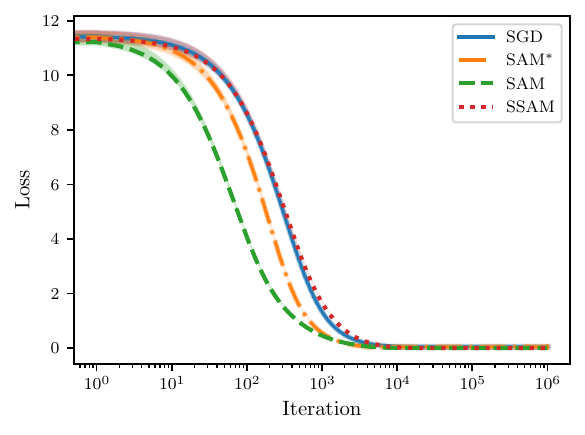}
		\caption{}
	\end{subfigure}
	\begin{subfigure}[b]{0.49\textwidth}
		\centering
		\includegraphics[width=\textwidth, clip, trim= 0 0 0 0]{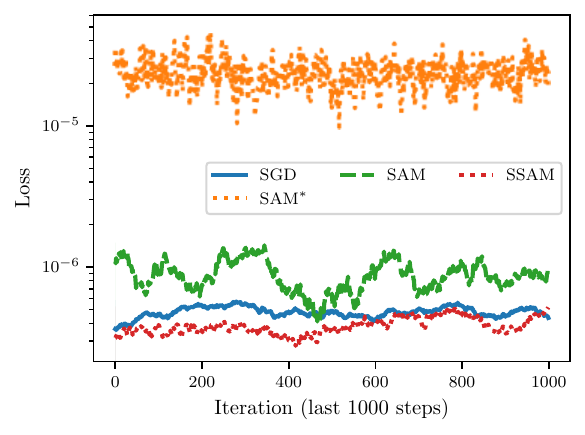}
		\caption{}
	\end{subfigure}
	\caption{The left panel illustrates the loss curves for different optimizers and the right panel displays the loss of the last 1000 steps.}
	\label{fig: opt comparison}
\end{figure}
\par
As depicted in Figure \ref{fig: sam convergence results}, we can observe that the convergence speed of SAM grows with the perturbation radius $\rho$.
More importantly, as we gradually increase $\rho$ from $0.01$ to $2$, the loss at the end of training first decreases and then starts to increase, suggesting that there indeed exists a tradeoff between $\rho^2L$ and the learning rate $\eta$ as predicted by Theorem \ref{theorem: sam convergence}.
We then compare SSAM against SAM and SGD in Figure \ref{fig: opt comparison} and find that SSAM indeed slows down the convergence speed.
But, just as implied by Theorem \ref{theorem: stablesam convergence}, it is able to achieve a lower loss than SAM when trained for a sufficiently long period.
Meanwhile, although SAM converges faster than SGD, it nevertheless converges to a larger noisy ball than SGD, which once again suggests that a careful choice of $\rho$ is critical to achieving a better generalization performance.
From Figure \ref{fig: opt comparison}(b), we can also observe that $\mathrm{SAM}^\ast$ seems to be more unstable than SAM because of the normalization step.
\subsection{Image Classification from Scratch}
\label{sec:Image Classification from Scratch}
We now continue to investigate how SSAM performs on real-world image classification problems.
The baselines include SGD, SAM \citep{andriushchenko2022towards},  $\mathrm{SAM}^\ast$ \citep{foret2020sharpness}, ASAM \citep{kwon2021asam}, and one-step GASAM \citep{zhang2022ga} that attempts to stabilize the training dynamics as well.
\begin{table}[t]
	\centering
	\caption{Results on CIFAR-10 and CIFAR-100. We run each model with three different random seeds and report the mean test accuracy (\%) along with the standard deviation. Text marked as bold indicates the best result.}
	\begin{adjustbox}{width=0.98\textwidth,center}
		\begin{tabular}{ccccccc}
			\toprule
			&       & ResNet-20        & ResNet-56        & ResNext-29-32x4d & WRN-28-10        & PyramidNet-110   \\ \midrule
			\multirow{7}{*}{CIFAR-10}  & SGD   & 92.78 $\pm$ 0.11 & 93.99 $\pm$ 0.19 & 95.47 $\pm$ 0.06 & 96.08 $\pm$ 0.16 & 96.02 $\pm$ 0.16 \\ \cmidrule{2-7}
			& $\mathrm{SAM}^\ast$   & 93.39 $\pm$ 0.14 & 94.93 $\pm$ 0.21 & 96.30 $\pm$ 0.01 & \textbf{96.91 $\pm$ 0.12} & 96.95 $\pm$ 0.06 \\ \cmidrule{2-7}
			& SAM   & 93.43 $\pm$ 0.24 & 94.92 $\pm$ 0.22 & 96.20 $\pm$ 0.08 & 96.55 $\pm$ 0.17 & 96.91 $\pm$ 0.16 \\ \cmidrule{2-7}
			& SSAM  & \textbf{93.46 $\pm$ 0.22} & \textbf{95.01 $\pm$ 0.19} & \textbf{96.33 $\pm$ 0.16} & 96.65 $\pm$ 0.18 & \textbf{97.04 $\pm$ 0.09} \\ \cmidrule{2-7}
			& ASAM  & 93.11 $\pm$ 0.23 & 94.51 $\pm$ 0.34 & 95.74 $\pm$ 0.06 & 96.24 $\pm$ 0.08 & 96.39 $\pm$ 0.14 \\ \cmidrule{2-7}
			& GASAM & 92.96 $\pm$ 0.14 & 94.18 $\pm$ 0.31 & 93.66 $\pm$ 0.92 & 95.75 $\pm$ 0.34 & 81.83 $\pm$ 1.58 \\ \midrule
			\multirow{7}{*}{CIFAR-100} & SGD   & 69.11 $\pm$ 0.11 & 72.38 $\pm$ 0.17 & 79.93 $\pm$ 0.15 & 80.42 $\pm$ 0.06 & 81.39 $\pm$ 0.31 \\ \cmidrule{2-7}
			& $\mathrm{SAM}^\ast$  & 70.30 $\pm$ 0.32 & 74.81 $\pm$ 0.07 & 81.09 $\pm$ 0.37 & \textbf{83.23 $\pm$ 0.19} & \textbf{84.03 $\pm$ 0.27} \\ \cmidrule{2-7}
			& SAM   & \textbf{70.77 $\pm$ 0.24} & 75.02 $\pm$ 0.19 & 81.25 $\pm$ 0.14 & 82.94 $\pm$ 0.35 & 83.68 $\pm$ 0.10 \\ \cmidrule{2-7}
			& SSAM  & 70.48 $\pm$ 0.18 & \textbf{75.11 $\pm$ 0.14} & \textbf{81.35 $\pm$ 0.13} & 82.80 $\pm$ 0.15 & 83.78 $\pm$ 0.17 \\ \cmidrule{2-7}
			& ASAM  & 69.57 $\pm$ 0.12 & 72.82 $\pm$ 0.32 & 80.01 $\pm$ 0.14 & 81.34 $\pm$ 0.31 & 82.04 $\pm$ 0.09 \\ \cmidrule{2-7}
			& GASAM & 69.02 $\pm$ 0.13 & 72.05 $\pm$ 1.09 & 77.81 $\pm$ 1.52 & 81.48 $\pm$ 0.31 & 45.59 $\pm$ 3.03 \\ \bottomrule
		\end{tabular}
	\end{adjustbox}
	\label{tab: test accuracy on cifar10-100}
\end{table}
\par
\textbf{CIFAR-10 and CIFAR-100.}
Here we adopt several popular backbones, ranging from basic ResNets \citep{he2016deep} to more advanced architectures such as WideResNet \citep{zagoruyko2016wide}, ResNeXt \citep{xie2017aggregated}, and PyramidNet \citep{han2017deep}.
To increase reproductivity, we decide to employ the standard implementations of these architectures that are encapsulated in a Pytorch package\footnote{Details can be found at \url{https://pypi.org/project/pytorchcv}.}.
Beyond the training and test set, we also construct a validation set containing 5000 images out of the training set.
Moreover, we only employ basic data augmentations such as horizontal flip, random crop, and normalization.
We set the mini-batch size to be 128 and each model is trained up to 200 epochs with a cosine learning rate decay \citep{loshchilov2016sgdr}.
The default base optimizer is SGD with a momentum of 0.9.
To determine the best choice of hyper-parameters for each backbone, slightly different from \citet{kwon2021asam,kim2022fisher}, we first use SGD to grid search the learning rate and the weight decay coefficient over \{0.01, 0.05, 0.1\} and \{1.0e-4, 5.0e-4, 1.0e-3\}, respectively.
For SAM and the variants, these two hyper-parameters are then fixed.
As suggested by \citet{kwon2021asam}, the perturbation radius $\rho$ of ASAM needs to be much larger, and we thus range it from \{0.5, 1.0, 2.0\}.
In contrast, we sweep the perturbation radius $\rho$ of other optimizers over \{0.05, 0.1, 0.2\}.
We run each model with three different random seeds and report the mean and the standard deviation of the accuracy on the test set.
\par
As shown in Table \ref{tab: test accuracy on cifar10-100}, apart from GASAM that even fails to converge for PyramidNet-110, both SAM and its variants are able to consistently perform better than the base optimizer SGD.
Meanwhile, it is worth noting that there is no significant difference between SAM \citep{andriushchenko2022towards} and $\mathrm{SAM}^\ast$ \citep{foret2020sharpness}, suggesting that the normalization term is not necessary for promoting generalization performance.
Focusing on the rows of SSAM and SAM, we further observe that SSAM can achieve a higher test accuracy than SAM on most backbones, though the improvements may not be significant.
\par
\textbf{ImageNet-1K \citep{deng2009imagenet}.} To investigate the performance of the renormalization strategy on a larger scale, we further evaluate it with the ImageNet-1K data set.
We only employ basic data augmentations, namely, resizing and cropping images to 224-pixel resolution and then normalizing them.
We adopt several typical architectures\footnote{Both models are trained with the timm library that is available at \url{https://github.com/huggingface/pytorch-image-models}.}, including two ResNets (ResNet-18/50), and two vision transformers (ViT-S-16/32) \citep{dosovitskiy2020image}.
ResNet-18 and ResNet-50 are trained for 90 and 100 epochs, respectively.
The default base optimizer is SGD with momentum acceleration, the peak learning rate is 0.1,  and the weight decay coefficient is 1.0e-4.
According to \citet{foret2020sharpness}, the perturbation radius $\rho$ is set to be 0.05.
For the vision transformer, the two models are trained up to 300 epochs and the default base optimizer is switched to AdamW. The peak learning rate is 3.0e-4 and the weight decay coefficient is 0.3.
The value of $\rho$ is 0.2 because the vision transformer favors larger $\rho$ than ResNet does \citep{chen2021vision}.
For both models, we use a constant mini-batch size of 256, and the cosine learning rate decay schedule is also employed.
As shown in Table \ref{tab: imagenet-acc}, the renormalization strategy remains effective on the ImageNet-1K data set. 
After applying the renormalization strategy to SAM, we can observe an improved top-1 accuracy on the validation set for all models, though the improvement is more pronounced for the two vision transformers.	
\begin{table}[t]
	\centering
	\caption{Top-1 accuracy (\%) on ImageNet-1K validation set with Inception-style data augmentation only. The base optimizer for ResNet is SGD with a momentum of 0.9. In contrast, the base optimizer for the vision transformer is AdamW.}
	\begin{adjustbox}{width=0.75\textwidth,center}
		\begin{tabular}{ccccc}
			\toprule
			& SGD/AdamW  & $\mathrm{SAM}^\ast$   & SAM &SSAM\\ \midrule
			ResNet-18  & 70.56 $\pm$ 0.03 & 70.74 $\pm$ 0.02  & 70.66 $\pm$ 0.12  & \textbf{70.76 $\pm$ 0.09} \\ \midrule
			ResNet-50 & 77.09 $\pm$ 0.12 & 77.81 $\pm$ 0.04  & 77.82 $\pm$ 0.08   & \textbf{77.89 $\pm$ 0.13} \\ \midrule
			ViT-S-32 & 65.42 $\pm$ 0.12 & 67.42 $\pm$ 0.21 & 69.98 $\pm$ 0.11    & \textbf{71.15 $\pm$ 0.18} \\ \midrule
			ViT-S-16   & 72.25 $\pm$ 0.09 & 73.81 $\pm$ 0.06  & 76.88 $\pm$ 0.25  & \textbf{77.41 $\pm$ 0.13}\\ \bottomrule
		\end{tabular}
	\end{adjustbox}
	\label{tab: imagenet-acc}
\end{table}
\subsection{Minima Analysis}
Finally, to gain a better understanding of SSAM, we further compare the differences in the sharpness of the minima found by different optimizers, which can be described by the dominant eigenvalue of the Hessian of the loss function \citep{foret2020sharpness, zhuang2021surrogate, kaddour2022flat}.
For this purpose, we train a ResNet-20 on CIFAR-10 and a ResNet-56 on CIFAR-100 using the same hyper-parameters and then estimate the top five eigenvalues of the Hessian.
\par
From Figure \ref{fig: sharpness analysis}, we can observe that compared to SGD, SAM significantly reduces the sharpness of the minima.
Meanwhile, it also can be observed that SSAM achieves the lowest eigenvalue, suggesting that the renormalization strategy is indeed beneficial in escaping saddle points and finding flatter regions of the loss landscape.
\begin{figure}[t]
	\centering
	\begin{subfigure}[b]{0.48\textwidth}
		\centering
		\includegraphics[width=\textwidth, clip, trim= 0 0 0 0]{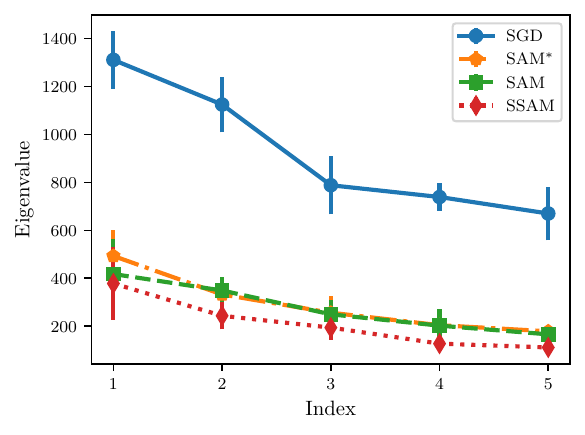}
		\caption{ResNet-20 on CIFAR-10}
	\end{subfigure}
	\begin{subfigure}[b]{0.48\textwidth}
		\centering
		\includegraphics[width=\textwidth, clip, trim= 0 0 0 0]{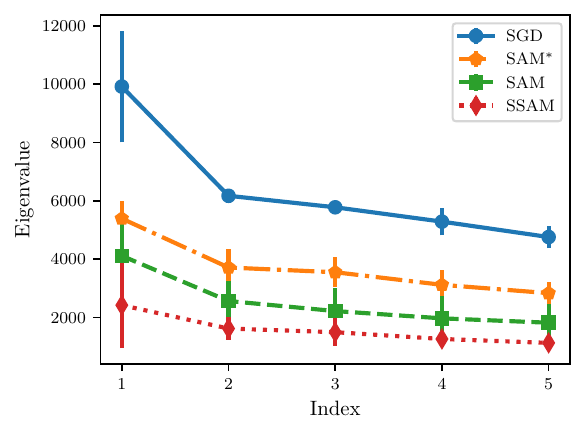}
		\caption{ResNet-56 on CIFAR-100}
	\end{subfigure}
	\caption{Illustration of the top five eigenvalues of the Hessian of the loss function, which is estimated using PyHessian \citep{yao2020pyhessian}.
		Since sharpness can be easily manipulated with the reparameterization trick \citep{dinh2017sharp}, following \citet{jiangfantastic2020}, we remove the batch normalization before computing the Hessian by fusing the normalization layer with the preceding convolution layer.
	}
	\label{fig: sharpness analysis}
\end{figure}
\section{Conclusion}
In this paper, we proposed a renormalization strategy to mitigate the issue of instability in sharpness-aware training.
We also evaluated its efficacy, both theoretically and empirically.
Following this line, we believe several directions deserve further investigation. Although we have verified that SSAM and SAM both can greatly improve the generalization performance over SGD, it remains unknown whether they converge to the same attractor of minima, properties of which might significantly differ from those found by SGD \citep{kaddour2022flat}.
Moreover, probing to what extent the renormalization strategy reshapes the optimization trajectory or the parameter space it explores is also of interest.
Another intriguing direction involves controlling the renormalization factor during the training process, for example, by imposing explicit constraints on its bounds or adjusting the perturbation radius according to the gradient norm of the ascent step.
Finally, the influence of renormalization strategy on adversarial robustness should also be investigated \citep{wei2023sharpness}.

\acks{This work is supported in part by the National Key Research and Development Program of China under Grant 2020AAA0105601, in part by the National Natural Science Foundation of China under Grants 12371512 and 62276208, and in part by the Natural Science Basic Research Program of Shaanxi Province 2024JC-JCQN-02.}

\appendix
\section{Training Instability on Realistic Neural Networks}
\label{sec: Training Instability on Realistic Neural Networks}
\begin{figure}[t]
	\centering
	\begin{subfigure}[b]{0.48\textwidth}
		\centering
		\includegraphics[width=\textwidth, clip, trim= 0 0 0 0]{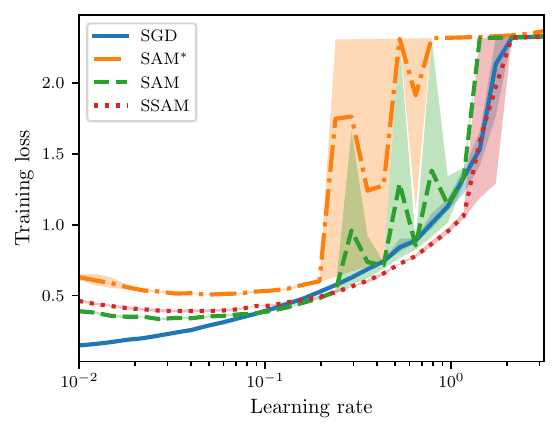}
		\caption{}
	\end{subfigure}
	\begin{subfigure}[b]{0.48\textwidth}
		\centering
		\includegraphics[width=\textwidth, clip, trim= 0 0 0 0]{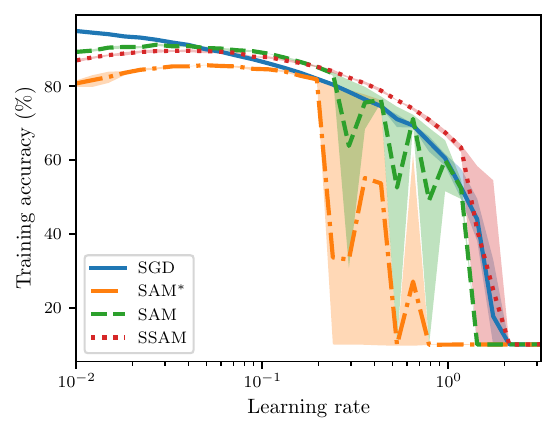}
		\caption{}
	\end{subfigure}
	\caption{Curves of (a) training loss and (b) training accuracy of different optimizers as a function of the learning rate. 
		Notice that both metrics are evaluated on the model of the last epoch. 
		The backbone is ResNet-20 and the data set is CIFAR-10.}
	\label{fig: success rate resnet20-cifar10}
\end{figure}
\begin{figure}[t]
	\centering
	\begin{subfigure}[b]{0.48\textwidth}
		\centering
		\includegraphics[width=\textwidth, clip, trim= 0 0 0 0]{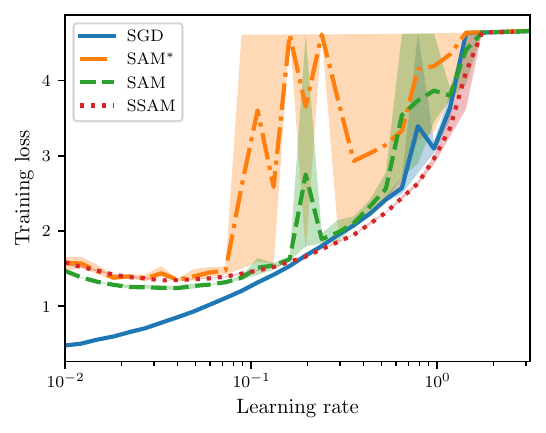}
		\caption{}
	\end{subfigure}
	\begin{subfigure}[b]{0.48\textwidth}
		\centering
		\includegraphics[width=\textwidth, clip, trim= 0 0 0 0]{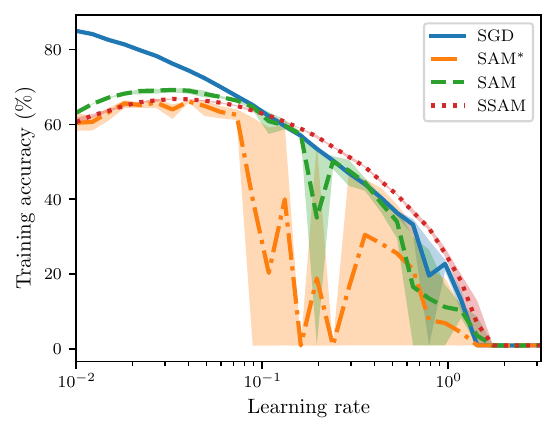}
		\caption{}
	\end{subfigure}
	\caption{Curves of (a) training loss and (b) training accuracy of different optimizers as a function of the learning rate. 
		Notice that both metrics are evaluated on the model of the last epoch. 
		The backbone is ResNet-56 and the data set is CIFAR-100.}
	\label{fig: success rate resnet56-cifar100}
\end{figure}
To examine the training stability on real-world applications, we also train a ResNet-20 on CIFAR-10 and a ResNet-56 on CIFAR-100 with different learning rates that are equispaced between 0.01 and 3.16 on the logarithm scale.
The default optimizer is SGD with a mini-batch size of $128$ and each model is trained up to $200$ epochs.
To make the difference more significant, we use a relatively large value of $\rho=1.0$, and the learning rate is not decayed throughout training.
\par
In Figures \ref{fig: success rate resnet20-cifar10} and \ref{fig: success rate resnet56-cifar100}, we report the metrics of loss and accuracy on the training set at the end of training. 
When the learning rate is small, we can observe that SGD attains the lowest loss and SAM performs better than SSAM.
As we continue to increase the learning rate, however, SAM becomes highly unstable and finally fails to converge.
As a comparison, we can observe that SSAM is more stable than SAM and even can achieve a lower loss and a higher accuracy than SGD in a relatively large range of learning rates.
Notice that SAM$^\ast$ is still much more unstable than SAM because of the normalization step.
\section{Coercivity of Strongly Convex Function}
\label{appendix: coeveritty of strongly convex}
\begin{lemma}
	\label{lemma: coeveritty of strongly convex}
	A function $f(\boldsymbol{w}): \mathbb{R}^d\mapsto\mathbb{R}_{+}$ is $\mu$-strongly convex and $L$-smooth, for all $\boldsymbol{w}$, $\boldsymbol{v}\in\mathbb{R}^d$, we have
	\begin{equation*}
		\left<\nabla f(\boldsymbol{v}) - \nabla f(\boldsymbol{w}), \boldsymbol{v} -\boldsymbol{w} \right> \geq \frac{\mu L}{\mu + L}  \|\boldsymbol{v} - \boldsymbol{w} \|_2^2 + \frac{1}{\mu + L} \|\nabla f(\boldsymbol{v}) - \nabla f(\boldsymbol{w}) \|_2^2.
	\end{equation*}
\end{lemma}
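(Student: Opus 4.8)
The plan is to reduce the claim to the classical co-coercivity (Baillon--Haddad) inequality for convex smooth functions via a standard shift. Since $f$ is $\mu$-strongly convex and $L$-smooth, I would first introduce the auxiliary function $g(\boldsymbol{w}) = f(\boldsymbol{w}) - \frac{\mu}{2}\|\boldsymbol{w}\|_2^2$. The first-order characterization of $\mu$-strong convexity says precisely that $g$ is convex, and subtracting the quadratic shrinks the smoothness constant, so $g$ is $(L-\mu)$-smooth with $\nabla g(\boldsymbol{w}) = \nabla f(\boldsymbol{w}) - \mu\boldsymbol{w}$. The heart of the argument is then the co-coercivity of the gradient of a convex $(L-\mu)$-smooth function, namely
\begin{equation*}
	\left<\nabla g(\boldsymbol{v}) - \nabla g(\boldsymbol{w}), \boldsymbol{v} - \boldsymbol{w}\right> \geq \frac{1}{L-\mu}\left\|\nabla g(\boldsymbol{v}) - \nabla g(\boldsymbol{w})\right\|_2^2,
\end{equation*}
which is the single nontrivial ingredient and a standard textbook result; I would cite it rather than reprove it.

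The remainder is elementary algebra. Substituting $\nabla g = \nabla f - \mu\,\mathrm{id}$ into the co-coercivity inequality, and abbreviating $D = \left<\nabla f(\boldsymbol{v}) - \nabla f(\boldsymbol{w}), \boldsymbol{v}-\boldsymbol{w}\right>$, $P = \|\boldsymbol{v}-\boldsymbol{w}\|_2^2$, and $Q = \|\nabla f(\boldsymbol{v}) - \nabla f(\boldsymbol{w})\|_2^2$, the left-hand side becomes $D - \mu P$ while the right-hand side, after expanding $\|\nabla f(\boldsymbol{v}) - \nabla f(\boldsymbol{w}) - \mu(\boldsymbol{v}-\boldsymbol{w})\|_2^2$, becomes $\tfrac{1}{L-\mu}(Q - 2\mu D + \mu^2 P)$. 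Clearing the denominator gives $(L-\mu)(D-\mu P) \geq Q - 2\mu D + \mu^2 P$; the terms linear in $D$ collect to $(L+\mu)D$ and the $P$-terms collect to $\mu L P$, leaving $(L+\mu)D \geq \mu L P + Q$. Dividing by $L+\mu$ yields exactly the stated bound.

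One edge case requires separate attention: when $\mu = L$, the function $g$ is $0$-smooth and the co-coercivity inequality degenerates (its right-hand side divides by zero). Here $\nabla g$ is constant, so $\nabla f(\boldsymbol{v}) - \nabla f(\boldsymbol{w}) = \mu(\boldsymbol{v}-\boldsymbol{w})$, and a direct substitution into the target shows it holds with equality. I expect this boundary case to be the only subtlety; once the co-coercivity lemma is invoked, everything else is routine rearrangement.
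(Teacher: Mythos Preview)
Your proposal is correct and follows essentially the same approach as the paper: introduce the shifted function $g(\boldsymbol{w})=f(\boldsymbol{w})-\tfrac{\mu}{2}\|\boldsymbol{w}\|_2^2$, apply the co-coercivity inequality for convex $(L-\mu)$-smooth functions, and rearrange. Your treatment is in fact slightly more careful, since you separately handle the degenerate case $\mu=L$ that the paper's proof leaves implicit.
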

\begin{proof}
	Consider the function $\varphi(\boldsymbol{w})= f(\boldsymbol{w}) - \frac{\mu}{2}  \|\boldsymbol{w} \|_2^2$, which is convex with $(L-\mu)$-smooth by appealing to the fact that $f(\boldsymbol{w})$ is $\mu$-strongly convex and $L$-smooth. Therefore, it follows that 
	\begin{equation*}
		\left<\nabla\varphi(\boldsymbol{v}) - \nabla\varphi(\boldsymbol{w}), \boldsymbol{v} -\boldsymbol{w} \right> \geq \frac{1}{L-\mu} \|\nabla\varphi(\boldsymbol{v}) - \nabla\varphi(\boldsymbol{w}) \|_2^2.
	\end{equation*}
	On the other hand, 
	\begin{equation*}
		\left<\nabla\varphi(\boldsymbol{v}) - \nabla\varphi(\boldsymbol{w}), \boldsymbol{v} -\boldsymbol{w} \right> =  \left<\nabla f(\boldsymbol{v}) - \nabla f(\boldsymbol{w}), \boldsymbol{v} -\boldsymbol{w} \right> -\mu  \left<\boldsymbol{v} -\boldsymbol{w}, \boldsymbol{v} -\boldsymbol{w} \right>.
	\end{equation*}
	Substituting the preceding inequality in, we have 
	\begin{equation*}
		\begin{aligned}
			\left<\nabla f(\boldsymbol{v}) - \nabla f(\boldsymbol{w}), \boldsymbol{v} -\boldsymbol{w} \right> & \geq  \frac{1}{L - \mu} \| \nabla f(\boldsymbol{v}) - \nabla f(\boldsymbol{w})  - \mu (\boldsymbol{v} - \boldsymbol{w}) \|_2^2  + \mu \|\boldsymbol{v} - \boldsymbol{w}\|_2^2.
		\end{aligned}
	\end{equation*}
	Expanding the first term on the right side,  it follows that
	\begin{equation*}
		\left<\nabla f(\boldsymbol{v}) - \nabla f(\boldsymbol{w}), \boldsymbol{v} -\boldsymbol{w} \right> \geq \frac{\mu L}{\mu + L}  \|\boldsymbol{v} - \boldsymbol{w} \|_2^2 + \frac{1}{\mu + L} \|\nabla f(\boldsymbol{v}) - \nabla f(\boldsymbol{w}) \|_2^2,
	\end{equation*}
	thus concluding the proof.
\end{proof}
\clearpage	
\vskip 0.2in
\bibliography{sample}

\end{document}